\documentclass{article}
\newcommand{\paperTitle}{Gap Safe Screening Rules for Fast Training\\ of Robust Support Vector Machines under Feature Noise}

\newcommand{\paperAbstract}{
    Robust Support Vector Machines (R-SVMs) address feature noise by adopting a worst-case robust formulation that explicitly incorporates uncertainty sets into training.
    While this robustness improves reliability, it also leads to increased computational cost.
    In this work, we develop safe sample screening rules for R-SVMs that reduce the training complexity without affecting the optimal solution.
    To the best of our knowledge, this is the first study to apply safe screening techniques to worst-case robust models in supervised machine learning.
    Our approach safely identifies training samples whose uncertainty sets are guaranteed to lie entirely on either side of the margin hyperplane, thereby reducing the problem size and accelerating optimization.
    Owing to the nonstandard structure of R-SVMs, the proposed screening rules are derived from the Lagrangian duality rather than the Fenchel–Rockafellar duality commonly used in recent methods.
    Based on this analysis, we first establish an ideal screening rule, and then derive a practical rule by adapting GAP-based safe regions to the robust setting.
    Experiments demonstrate that the proposed method significantly reduces training time while preserving classification accuracy.
}

\newcommand{\paperDate}{20/09/2025}

\newcommand{\authorLe}{Thu-Le Tran}
\newcommand{\affilLe}{Can Tho University, Vietnam}
\newcommand{\emailLe}{ttle@ctu.edu.vn}

\newcommand{\authorKien}{Kien Trung Nguyen}
\newcommand{\affilKien}{Can Tho University, Vietnam}
\newcommand{\emailKien}{trungkien@ctu.edu.vn}

\newcommand{\authorHau}{Tan-Hau Nguyen}
\newcommand{\affilHau}{Can Tho University, Vietnam}
\newcommand{\emailHau}{tanhau7420@gmail.com}

\usepackage{xcolor}

\usepackage{authblk}

\usepackage[left=3cm, right=3cm]{geometry}

\usepackage{amsthm}

\usepackage[square,authoryear]{natbib}

\usepackage[colorlinks=true,
            linkcolor=blue,
            citecolor=red,
            urlcolor=blue]{hyperref}

\usepackage{algorithm} \usepackage{algpseudocode}

\usepackage{float}

\usepackage{amsmath}
\usepackage{amssymb}

\usepackage{booktabs}

\usepackage{pgfplots}
\pgfplotsset{compat=1.18}
\usetikzlibrary{calc}

\usepackage{xcolor}

\theoremstyle{plain}
\newtheorem{theorem}{Theorem}
\newtheorem{lemma}{Lemma}

\theoremstyle{definition}

\theoremstyle{remark}
\newtheorem{remark}{Remark}

\title{\paperTitle}
\date{\paperDate}
\author[1]{\authorHau}
\author[2*]{\authorLe}
\author[3]{\authorKien}
\affil[1]{\affilHau \\ \texttt{\emailHau}}
\affil[2]{\affilLe \\ \texttt{\emailLe}}
\affil[3]{\affilKien \\ \texttt{\emailKien}}
\affil[*]{\textit{Corresponding Author}}

\begin{document}
\maketitle
\begin{abstract}
    \paperAbstract
\end{abstract}



\section{Introduction}

Support Vector Machines (SVMs)~\citep{cortes1995svm,boser1992training,vapnik1998statistical} have long been regarded as one of the most influential methods in supervised learning, owing to their solid theoretical foundation and strong empirical performance. 
A key property of SVMs is that the resulting decision boundary depends only on a small subset of training samples, known as support vectors, while the remaining samples do not influence the optimal solution.

Despite these appealing properties, training SVMs can be computationally expensive for large-scale datasets, as standard solvers must process the entire training set throughout the optimization procedure. 
This is inherently inefficient, since the optimal solution ultimately depends on only a small number of support vectors. 
However, because the identities of these support vectors are unknown before solving the optimization problem, identifying and removing redundant samples in advance is nontrivial. 
This naturally motivates the development of screening techniques with theoretical guarantees, which aim to safely eliminate irrelevant training samples before or during training.

To address this challenge, safe screening techniques have been developed to identify elements of an optimization problem that are guaranteed to be inactive at optimality and can therefore be safely removed~\citep{elghaoui2010,ogawa2014sample,ogawa2013safe,ndiaye2015gap,ndiaye2016gap,ndiaye2017gap,zhao2014safe,pan2017safe,yang2018safe,guyard2022screen,su2024safe,tran2026new}.

Depending on the target, safe screening can be classified into feature screening~\citep{elghaoui2010}, sample screening~\citep{ogawa2013safe}, and hybrid approaches that combine both~\citep{shibagaki2016simultaneous}. In the context of SVMs, sample screening is particularly important, as it reduces the training set size without affecting the optimal classifier.

Beyond the choice of screening target, safe screening methods can also be distinguished by how the screening rules are constructed and applied during optimization. From a methodological perspective, safe screening strategies can be classified into static~\citep{elghaoui2010}, sequential~\citep{wang2013lasso}, and dynamic~\citep{bonnefoy2014dynamic} approaches, depending on whether screening rules are applied once, along a regularization path, or iteratively refined during optimization.

Another important classification is based on the geometry of the safe region used to localize the optimal solution. In sample screening, the safe region contains a primal solution, whereas in feature screening it contains a dual solution. Common constructions include ball-shaped~\citep{elghaoui2010}, dome-shaped~\citep{tran2022beyond}, ellipsoid-shaped~\citep{dai2012ellipsoid,mialon2020screening}, linear programming based approach~\citep{shibagaki2016simultaneous} and region-free approach~\citep{herzet2022region}. Among these, GAP ball~\citep{fercoq2015mind,ndiaye2015gap,ndiaye2017gap} and its refinement RYU ball~\citep{tran2025one} regions have emerged as a state-of-the-art choice due to their strong theoretical guarantees and practical effectiveness.

Despite their strong guarantees and practical success in classical SVMs, existing safe screening methods have been primarily developed under standard SVM formulations. 
However, classical SVMs are known to be sensitive to noise, which can significantly degrade generalization performance and motivates the development of robust variants.

In robust support vector machines, one typically considers robustness with respect to label noise or feature noise.
In this paper, we focus on robustness against feature noise, where uncertainty is explicitly modeled through uncertainty sets around the training samples.

One line of research addresses robustness to label noise and outliers by adopting non-convex loss functions, such as the ramp loss.
For this class of ramp-loss-based SVMs, safe screening techniques have already been investigated and shown to provide computational benefits~\citep{zhai2019safe}.

A different line of work considers robustness against feature noise by optimizing against worst-case perturbations within uncertainty sets.
In particular, the Robust SVM (R-SVM) proposed by~\cite{xu2009robust} formulates training as a worst-case robust optimization problem and yields classifiers that are more stable under feature perturbations.
However, this robustness comes at the cost of a more challenging optimization problem compared to standard SVMs.

Despite the success of safe screening methods for standard SVMs and ramp-loss-based robust SVMs, to the best of our knowledge, no safe screening framework has been developed for the feature-noise R-SVM model of Xu et al.
This gap stems from the min--max structure induced by uncertainty sets, which cannot be directly handled by existing screening approaches.

Specifically, most existing safe screening frameworks for SVM-type models rely on a composite convex structure of the form 
$
f(Aw) + g(w),
$
and are developed under Fenchel-Rockafellar duality. 
Within this setting, one derives the dual problem, establishes KKT conditions, and constructs screening rules based on a safe region that contains optimal dual solutions. 
Classical constructions such as the GAP safe ball \citep{ndiaye2017gap} and the more recent, tighter RYU safe ball \citep{tran2025one} leverage strong convexity and primal-dual gap bounds inherent to this composite framework.

In contrast, the main loss function in R-SVM involves a min-max formulation induced by uncertainty sets and cannot be expressed in the composite form $f(Aw)$. 
Consequently, the Fenchel-Rockafellar framework underlying existing GAP and RYU constructions does not directly apply. 
To overcome this difficulty, we reformulate the R-SVM as a second-order cone program and derive its dual via Lagrangian duality, from which the KKT conditions and screening rules are established. 
While the RYU ball provides a tighter safe region, its derivation crucially depends on the composite Fenchel-Rockafellar structure and is therefore not compatible with our dual formulation. 
We thus adapt the GAP safe ball within the Lagrangian framework to construct a valid safe region for the robust R-SVM.

To further clarify the positioning of our method within the existing safe screening literature, we summarize the main related approaches in Table~\ref{tab:position_paper}.









\begin{table}[H]
\centering
\caption{Position of our result among closely related safe screening approaches}

\label{tab:position_paper}
\begin{tabular}{l c c c l}
\toprule
\textbf{Problem} \& \textbf{Reference} & 
\textbf{Noise} &
\textbf{Screened} & \textbf{Methodology} & \textbf{Safe region} \\
\midrule
Sparse SVM 
\citep{elghaoui2010} 
& --- 
& Features 
& Static 
& --- \\

SVM 
\citep{ogawa2013safe} 
& --- 
& Samples 
& Sequential 
& Dome \\

Sparse SVM 
\citep{zhao2014safe} 
& --- 
& Features 
& Sequential 
& Dome \\

Sparse SVM 
\citep{shibagaki2016simultaneous} 
& --- 
& Samples, Features 
& Dynamic 
& Ball (GAP) \\


Ramp-loss SVM 
\citep{zhai2019safe} 
& Label
& Samples 
& Dynamic 
& Ball (GAP) \\

R-SVM 
\textbf{(This paper)}
& Feature
& Samples 
& Dynamic 
& Ball (GAP) \\
\bottomrule
\end{tabular}
\end{table}

This paper makes the following contributions. First, we establish the Lagrangian dual formulation of the R-SVM problem and derive its Karush--Kuhn--Tucker (KKT) optimality conditions. Based on these conditions, we develop an ideal safe screening rule and a practical screening rule by adapting the GAP safe ball region to the robust setting. Finally, we demonstrate the effectiveness of the proposed approach through numerical experiments, showing that it significantly accelerates training while preserving classification accuracy.

The source code for reproducing the experiments is publicly available at 
\url{https://github.com/NguynHau/Paper_SS_RSVM_2026/tree/v1.0}.
An archived version of the code is also available on Zenodo with DOI
\href{https://doi.org/10.5281/zenodo.18996767}{10.5281/zenodo.18996767}.

The remainder of the paper is organized as follows. Section~\ref{sec:robustsvm} reviews the R-SVM model. Section~\ref{sec:dual_kkt} presents the dual formulation and the associated KKT conditions. Section~\ref{sec:safescreening} introduces the proposed safe screening rules, and Section~\ref{sec:experiments} reports the experimental results.

\section{R-SVM with Uncertainty sets}
\label{sec:robustsvm}

We consider the binary classification setting, where the goal is to separate two classes of data points using a hyperplane  in a $d$-dimensional Euclidean space \citep{cortes1995svm}. Let
\[
\{(x_i,y_i)\}_{i \in \mathcal{N}}, \quad
    x_i \in \mathbb{R}^d, \quad y_i \in \{-1,+1\}, 
\]
be the training dataset, where $\mathcal{N} \triangleq \{1,\ldots,n\}$ is the index set of training samples, $x_i$ represents the feature vector of the $i$-th sample, and $y_i$ its associated binary label. A linear classifier is a hyperplane parameterized by a weight vector $w \in \mathbb{R}^d$, defining the decision function\footnote{Without loss of generality, the bias term is omitted, since it can be absorbed into $w$ by augmenting each feature vector $x_i$ with an additional constant dimension.}:
\begin{equation*}
    \label{eq:linear_classifier}
    f(x) \triangleq  \langle w, x \rangle,
\end{equation*}
the prediction rule is then $\mathrm{sign}(f(x))$, where the sign indicates the assigned class. We refer to $f(x) = 0$ as the separating hyperplane, while $f(x) = \pm 1$ correspond to the margin hyperplanes. Training samples for which the functional margin satisfies $y_i f(x_i) = 1$ lie on the margin and are known as \emph{support vectors}.


In the classical formulation of SVM \citep{cortes1995svm}, misclassification and margin violations are quantified through the hinge loss, defined as
\begin{equation}
    \label{eq:hinge_loss}
    \ell_i(w) = [1 - y_if(x_i)]_+, \quad \forall i \in \mathcal{N}.
\end{equation}
Here, $[t]_+=\max(t, 0)$. Intuitively, the hinge loss penalizes samples that are either misclassified or correctly classified but lie within the margin, while zero loss is incurred only by samples whose functional margin satisfies $y_i f(x_i) \ge 1$.

The soft-margin SVM then seeks a balance between maximizing the margin and minimizing the cumulative hinge loss \eqref{eq:hinge_loss}, leading to the convex optimization problem.
\begin{equation}
    \label{eq:soft_svm}
    \min_{w \in \mathbb{R}^d} \;\;\mathcal{P}_0(w)\triangleq  \frac{1}{2}\|w\|_2^2 + C \sum_{i \in \mathcal{N}} \ell_i(w),
\end{equation}
where the regularization parameter $C >0$ controls the trade-off between geometric margin maximization and hinge loss minimization. This formulation recovers the hard-margin maximum-margin classifier in the separable case while allowing controlled margin violations in the non-separable setting.

In practice, training data are often contaminated by noise or measurement errors, which can significantly degrade the performance of standard linear classifiers. To explicitly account for such uncertainty, we adopt the Robust Support Vector Machine (R-SVM) framework proposed by \citet{xu2009robust}.

Each observed sample is modeled as $x_i = \tilde{x}_i + \delta_i$, where $\tilde{x}_i \in \mathbb{R}^d$ denotes the nominal input and $\delta_i$ represents an unknown perturbation. Following \citet{xu2009robust}, the perturbation is assumed to belong to a norm-bounded uncertainty set. While the original R-SVM framework allows general $\ell_p$-norm uncertainty sets, for simplicity, in this work we focus on the $\ell_2$-norm case\footnote{The $\ell_1$, $\ell_2$, and $\ell_\infty$ norms induce a cross-polytope, a Euclidean ball, and a hypercube, respectively (\citet{xu2009robust}). We focus on the $\ell_2$ case since it yields a differentiable, and facilitates the derivation of safe screening rules (see Section~\ref{sec:safescreening}).} and define:
\begin{equation}
\label{eq:uncertainty_set}
\mathcal{U}_i = \left\{ \delta_i \in \mathbb{R}^d \;\middle|\; \|\delta_i\|_2 \le \rho_i \right\}, \quad \forall i \in \mathcal{N},
\end{equation}
where $\rho_i \ge 0$ controls the magnitude of the uncertainty. 

Robustness is enforced by requiring the classification constraint to hold for all admissible perturbations in $\mathcal{U}_i$ defined in \eqref{eq:uncertainty_set}, 
the R-SVM problem can be written as
\begin{equation}
\label{eq:firstrobust}
\min_{w \in \mathbb{R}^d} \max_{(\delta_1, \delta_2, \dots, \delta_n) \in \Pi_{i =1}^{n}\mathcal{U}_i} 
\; \frac{1}{2}\|w\|_2^2
+ C \sum_{i \in \mathcal{N}} \bigl[\,1 - y_i \langle w, \tilde{x}_i + \delta_i  \rangle  \bigr]_+.
\end{equation}

Equivalently, the min--max formulation in \eqref{eq:firstrobust} can be written in a compact form as
\begin{equation}
\label{eq:secondrobust}
\min_{w \in \mathbb{R}^d}  
\; \frac{1}{2}\|w\|_2^2
+ C \sum_{i \in \mathcal{N}} \ell_i^{\text{rob}}(w),
\end{equation}
where the robust loss function is defined as $\ell_i^{\text{rob}}(w)  \triangleq \max_{\|\delta_i\|_2 \le \rho_i} \left[1 - y_i \langle w, \tilde{x}_i + \delta_i  \rangle \right]_+. $ 

By exploiting the monotonicity of the hinge loss and properties of norm-bounded uncertainty sets in \eqref{eq:uncertainty_set} can be equivalently reformulated in closed form, resulting in a tractable convex optimization problem.

\begin{lemma}
\label{lem:robust_loss}
Consider a training example $i$, the robust loss admits the following closed-form expression:
\begin{equation}
\label{eq:ell_i}
 \ell_i^{\text{rob}}(w) = \left[ 1 - y_i \langle w, \tilde{x}_i \rangle + \rho_i \|w\|_2 \right]_+.   
\end{equation}

\end{lemma}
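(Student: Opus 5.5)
The plan is to pull the maximum over $\delta_i$ inside the hinge. Write $h(t)=[t]_+$ and $g_i(\delta_i)=1-y_i\langle w,\tilde x_i+\delta_i\rangle$. Since $h$ is nondecreasing and continuous, and $\mathcal{U}_i$ is compact, we will show
\[
\max_{\|\delta_i\|_2\le\rho_i} h\bigl(g_i(\delta_i)\bigr)=h\Bigl(\max_{\|\delta_i\|_2\le\rho_i} g_i(\delta_i)\Bigr).
\]
For the inequality "$\le$", note that for every admissible $\delta_i$ we have $g_i(\delta_i)\le \max g_i$, so monotonicity gives $h(g_i(\delta_i))\le h(\max g_i)$. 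For "$\ge$", take a maximizer $\delta_i^\star$ of $g_i$ on the compact ball; it is an admissible perturbation, so the left side is at least $h(g_i(\delta_i^\star))$.

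It then remains to compute the inner maximum, which is linear in $\delta_i$:
\[
\max_{\|\delta_i\|_2\le\rho_i}\bigl(1-y_i\langle w,\tilde x_i\rangle - y_i\langle w,\delta_i\rangle\bigr)=1-y_i\langle w,\tilde x_i\rangle+\max_{\|\delta_i\|_2\le\rho_i}\langle -y_i w,\delta_i\rangle .
\]
By Cauchy--Schwarz, $\langle -y_iw,\delta_i\rangle\le \|y_iw\|_2\|\delta_i\|_2\le\rho_i\|w\|_2$, using $|y_i|=1$. This bound is attained: if $w\neq 0$, take $\delta_i^\star=-\rho_i y_i w/\|w\|_2$; if $w=0$, both sides vanish. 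Thus the inner maximum equals $\rho_i\|w\|_2$ (the dual-norm identity for $\ell_2$), and substituting back yields \eqref{eq:ell_i}.

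None of these steps is a real obstacle. The only points needing care are the interchange of $\max$ with $[\cdot]_+$, which rests on monotonicity together with the existence of a maximizer, and the degenerate cases $w=0$ or $\rho_i=0$ in the attainment argument.
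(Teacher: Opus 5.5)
Your proposal is correct and follows the paper's route: move the maximum over $\delta_i$ inside $[\cdot]_+$ using monotonicity, then evaluate the linear maximum over the $\ell_2$ ball by Cauchy--Schwarz to get $\rho_i\|w\|_2$. Your version is somewhat more careful than the paper's, since you prove both inequalities of the interchange, exhibit the maximizer $\delta_i^\star=-\rho_i y_i w/\|w\|_2$, and handle the case $w=0$.
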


\begin{proof}
Since the hinge loss is monotone non-decreasing, the inner maximization over $\delta_i$ can be interchanged with the hinge operator. Hence,
\[
\ell_i^{\text{rob}}(w)
=
\left[
\max_{\|\delta_i\|_2 \le \rho_i}
\Bigl( 1 - y_i \langle w,  \tilde{x}_i \rangle - y_i \langle w, \delta_i \rangle \Bigr)
\right]_+ .
\]
The inner maximization problem is linear in $\delta_i$ and is taken over an
$\ell_2$-ball.
Since $y_i \in \{\pm 1\}$, maximizing $-y_i \langle w, \delta_i \rangle$ is equivalent to
maximizing the absolute inner product $|\langle w, \delta_i \rangle|$.
By the Cauchy--Schwarz inequality, we obtain
\[
\max_{\|\delta_i\|_2 \le \rho_i} |\langle w, \delta_i \rangle|
=
\rho_i \|w\|_2 .
\]
Substituting this result into the above expression yields \eqref{eq:ell_i}.
\end{proof}


\begin{theorem}\label{thm:strongly-convex}
The R-SVM problem can be rewritten in the following closed form:
\begin{equation}
\min_{w \in \mathbb{R}^d}
 \mathcal{P}(w) \triangleq \frac{1}{2}\|w\|_2^2
+ C \sum_{i \in \mathcal{N}}
\Bigl[\,1 - y_i \langle w, \tilde{x}_i \rangle
+ \rho_i \|w\|_2 \Bigr]_+ .
\label{eq:robustprimal}
\end{equation}
Moreover, the objective function of the above problem is strongly convex with respect to the variable $w$.
\end{theorem}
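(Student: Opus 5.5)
The proof splits into the closed-form rewriting and the strong convexity claim.

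For the closed form, we start from the compact formulation \eqref{eq:secondrobust}. The min--max problem \eqref{eq:firstrobust} is equivalent to it because the $\delta_i$ are decoupled across samples. The quadratic term does not depend on $\delta$, and the maximum of a sum of functions of independent variables over a product set equals the sum of the individual maxima. We then substitute the closed-form expression of Lemma~\ref{lem:robust_loss} for each $\ell_i^{\text{rob}}(w)$, which gives \eqref{eq:robustprimal} directly.

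For strong convexity, we write $\mathcal{P}(w) = \frac{1}{2}\|w\|_2^2 + C\sum_{i\in\mathcal{N}} h_i(w)$, where
\[
h_i(w) = \bigl[g_i(w)\bigr]_+, \qquad g_i(w) = 1 - y_i\langle w,\tilde{x}_i\rangle + \rho_i\|w\|_2 .
\]
We show that each $h_i$ is convex in three steps.
\begin{itemize}
\item The map $g_i$ is convex. It is the sum of an affine function and $\rho_i\|w\|_2$, which is convex because norms are convex and $\rho_i\ge 0$.
\item The map $t\mapsto [t]_+=\max(t,0)$ is convex and nondecreasing. Composing it with the convex $g_i$ therefore preserves convexity. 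Equivalently, $h_i=\max(g_i,0)$ is a pointwise maximum of two convex functions.
\item Since $C>0$, the term $C\sum_i h_i$ is convex.
\end{itemize}

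The term $\frac{1}{2}\|w\|_2^2$ is $1$-strongly convex. Adding a convex function to it preserves this: $\mathcal{P}(w)-\frac{1}{2}\|w\|_2^2 = C\sum_i h_i(w)$ is convex, which is exactly the definition of $1$-strong convexity. Hence $\mathcal{P}$ is $1$-strongly convex, and in particular \eqref{eq:robustprimal} has a unique minimizer.

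The only point needing care is the interchange of $\max$ over the product set with the sum in the first part, and it is justified by separability. The convexity argument is routine. Note that we never need differentiability of $\|w\|_2$ at $0$, since convexity is argued without derivatives.
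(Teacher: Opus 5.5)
Your proof is correct and follows the paper's argument. You substitute Lemma~\ref{lem:robust_loss} into \eqref{eq:secondrobust}, then use that $\tfrac12\|w\|_2^2$ is $1$-strongly convex and that each robust hinge term is convex as a nondecreasing convex function composed with a convex one; your added justification of the max--sum interchange over the product uncertainty set, which the paper takes for granted, is a welcome bit of extra rigor.
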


\begin{proof}
The closed-form representation~\eqref{eq:robustprimal} follows immediately
by substituting the expression of the robust loss in Lemma~\ref{lem:robust_loss}
into the objective function of problem~\eqref{eq:secondrobust}.

The function $\frac{1}{2}\|w\|_2^2$ is $1$-strongly convex with respect to the Euclidean norm. For each $i \in \mathcal{N}$, the mapping $w \mapsto [\,1 - y_i \langle w, \tilde{x}_i \rangle + \rho_i \|w\|_2 ]_+$ is convex, since it is the composition of the convex and nondecreasing hinge function with a convex function of $w$. Therefore, $\mathcal{P}(w)$ is the sum of a $1$-strongly convex function and convex functions, and hence is $1$-strongly convex.
\end{proof}




From a geometric perspective, R-SVM expands each training sample into an uncertainty ball of radius $\rho_i$ around its nominal location, representing the set of admissible perturbations. The classifier is therefore required to correctly separate all such uncertainty regions rather than only the observed samples. As a consequence, the resulting decision boundary typically exhibits a reduced geometric margin compared to that of the standard SVM, reflecting the need to ensure correct classification under worst-case perturbations. This geometric interpretation is illustrated in Fig.~\ref{fig:r-svm} through a comparison of the margin hyperplanes produced by standard SVM and R-SVM on linearly separable data.

\begin{figure}[!htpb]
    \centering
    \includegraphics[width=0.6\textwidth]{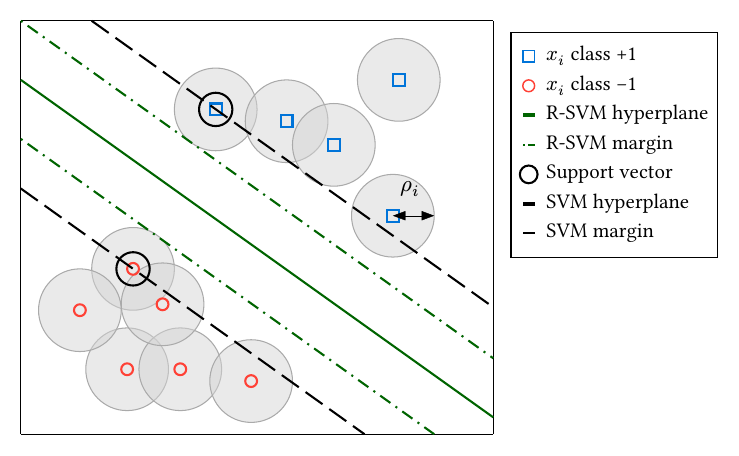}
    \caption{Separating hyperplane of R-SVM on linearly separable data}
    \label{fig:r-svm}
\end{figure}




\begin{remark}
When $\rho_i = 0$ for all $i$, the R-SVM formulation reduces to the classical soft-margin SVM defined in \eqref{eq:soft_svm}.
This observation shows that the R-SVM of Xu et al. can be viewed as a natural generalization of the standard SVM, with the uncertainty radius $\rho_i$ acting as a tunable parameter that controls the degree of robustness against input perturbations.
\end{remark}





\section{Dual Formulation and Optimality Conditions for R-SVM}
\label{sec:dual_kkt}

Most state-of-the-art safe screening methods for large-scale SVMs are fundamentally grounded in duality framework. In particular, they typically rely on expressing the learning problem in a separable form $\min_{w} \, f(Aw) + g(w)$ 
and exploiting Fenchel--Rockafellar duality through the explicit computation of the Fenchel conjugates $f^*$ and $g^*$, see e.g.~\citep{tran2025one}. 
However, R-SVM does not naturally fit this framework. Robustification of the hinge loss introduces an additional term involving the norm of the weight vector inside each hinge loss, thereby breaking the linear structure $Aw$. Even if R-SVM can formally be written in an $f(w)+g(w)$ form by absorbing this nonlinearity into $f$, the resulting function becomes a composition of the hinge loss with both an affine mapping and a norm. As a consequence, deriving the $f^*$ in a tractable closed form becomes nontrivial.

In the original work of Xu et al.~\citep{xu2009robust}, R-SVM is formulated and solved as a convex second-order cone program (SOCP). However, the corresponding dual problem and primal--dual optimality conditions are not explicitly investigated. To remain consistent with this formulation while enabling safe screening, we also adopt an SOCP-based representation of R-SVM and derive its Lagrangian dual directly, rather than relying on Fenchel--Rockafellar duality. This primal--dual characterization forms the theoretical foundation for the safe screening rules developed in the subsequent section.

\subsection{SOCP Reformulation of R-SVM}

The R-SVM, also referred to as the primal problem, can be expressed as a second-order cone program (SOCP) by introducing auxiliary nonnegative variables $t$ and $\xi_i$. Specifically, $t$ upper-bounds the weight norm $\|w\|_2$, and $\xi_i$ controls the hinge loss for each sample under the worst-case perturbation. The resulting formulation is
\begin{equation}
\label{eq:primal_socp}
\begin{aligned}
\min_{\substack{
w \in \mathbb{R}^d,\;  \xi \in \mathbb{R}_+^n,\\ t \in \mathbb{R}_+
}}
\quad &
\frac{1}{2} \|w\|_2^2 + C \sum_{i \in \mathcal{N}} \xi_i \\
\text{s.t.} \quad
& 1 - y_i \langle w,  \tilde{x}_i \rangle + \rho_i t - \xi_i \le 0,
\quad i \in \mathcal{N}, \\
& \xi_i \ge 0, \quad i \in \mathcal{N}, \\
& \|w\|_2 \le t.
\end{aligned}
\end{equation}


\subsection{Dual Problem}

To derive the dual of \eqref{eq:primal_socp}, we introduce the Lagrange multipliers $\alpha_i$, $\mu_i$ (for $i \in \mathcal{N}$), and $\gamma$ corresponding respectively to the functional margin constraints, slack nonnegativity, and the cone constraint $\|w\|_2 \le t$. Under dual feasibility, these multipliers satisfy
\begin{equation}
    \label{eq:dual_feas}
    \alpha_i \ge 0, \quad \mu_i \ge 0, \quad \gamma \ge 0.
\end{equation}
The resulting Lagrangian is given by
\begin{equation}
\label{eq:lagrangian}
\begin{aligned}
\mathcal{L}(w,\xi,t;\alpha,\mu,\gamma) 
= \frac{1}{2}\|w\|_2^2 - \langle d, w \rangle + \gamma \|w\|_2  
+ t\big(s-\gamma\big) + \sum_{i \in \mathcal{N}} \xi_i (C-\alpha_i-\mu_i) + \sum_{i \in \mathcal{N}} \alpha_i,
\end{aligned}
\end{equation}
where
\begin{equation}
\label{eq:dc}
d \triangleq \sum_{i \in \mathcal{N}} \alpha_i y_i \tilde{x}_i, \qquad s \triangleq \sum_{i \in \mathcal{N}} \alpha_i \rho_i.
\end{equation}

Applying the first-order optimality (KKT) conditions to \eqref{eq:lagrangian} yields necessary relations between the primal and dual variables. Differentiating with respect to $b$, $\xi_i$, and $t$ gives 
\begin{subequations}
\label{eq:kkt_stationarity_primal}
\begin{align}
&0 \in \partial_w \mathcal{L} = w - d + \gamma \, \partial \|w\|_2
    &&\Rightarrow 
    w =
        \begin{cases}
        0, & \|d\|_2 \le \gamma,\\
        \big(1 - \frac{\gamma}{\|d\|_2}\big) d , &\|d\|_2 > \gamma,
        \end{cases}
    \tag{13a}\label{eq:kkt_w}\\
&\frac{\partial \mathcal{L}}{\partial \xi_i} = C - \alpha_i - \mu_i = 0 
    &&\Rightarrow \mu_i = C - \alpha_i \ge 0 
    \Rightarrow 0 \le \alpha_i \le C,  
    \tag{13b}\label{eq:kkt_xi}\\
&\frac{\partial \mathcal{L}}{\partial t} = s - \gamma = 0 
    &&\Rightarrow \gamma = s = \sum_{i \in \mathcal{N}} \alpha_i \rho_i.
    \tag{13c}\label{eq:kkt_t}
\end{align}
\end{subequations}


According to \eqref{eq:kkt_w}, at the optimal solutions the vector $w^\star$ satisfies
\begin{equation}
\label{eq:w}
w^\star
=\Big(1-\dfrac{s}{\| d \|_2}\Big)d,
\qquad \text{for } \|d\|_2>\gamma.
\end{equation}

Minimizing the Lagrangian in~\eqref{eq:lagrangian} over the primal variables  $(w,\xi,t)$ yields the dual function
\begin{equation}
\label{eq:dual_func}
\mathcal{D}(\alpha)
= \sum_{i \in \mathcal{N}} \alpha_i
  + \inf_{w}\,\Phi(w),
\end{equation}
where \(\Phi(w) \triangleq \frac{1}{2}\|w\|_2^2 - \langle d, w \rangle + s\|w\|_2.\) We now derive the closed-form dual of the R-SVM primal problem \eqref{eq:robustprimal} using the above KKT conditions and the primal \eqref{eq:robustprimal}.

Using these KKT insights, we now derive the closed-form dual.

\begin{theorem}
The dual problem of the R-SVM is given by
\begin{equation}
\label{eq:dual_problem}
\alpha^\star
\in
\arg\max_{\alpha \in [0,C]^n}
\left\{
\sum_{i \in \mathcal{N}} \alpha_i
-
\frac{1}{2}
\left[
\left\|
\sum_{i \in \mathcal{N}} \alpha_i y_i \tilde{x}_i
\right\|_2
-
\sum_{i \in \mathcal{N}} \alpha_i \rho_i
\right]_+^2
\right\}.
\end{equation}
\end{theorem}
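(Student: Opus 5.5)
The plan is to compute the dual function \eqref{eq:dual_func} in closed form by minimizing $\Phi(w)=\frac12\|w\|_2^2-\langle d,w\rangle+s\|w\|_2$ explicitly. The Lagrangian \eqref{eq:lagrangian} is linear in $t$ and in each $\xi_i$. So its infimum over $t\ge 0$ and $\xi\ge 0$ is $-\infty$ unless the coefficients are nonnegative. The multipliers $\mu_i\ge 0$ and $\gamma\ge 0$ enter only through these coefficients, so they can be eliminated. Taking the Lagrangian exactly as written, with $\xi$ and $t$ treated as free variables, the KKT relations \eqref{eq:kkt_xi}--\eqref{eq:kkt_t} force $\mu_i=C-\alpha_i$ and $\gamma=s$. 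This leaves the box constraint $0\le\alpha_i\le C$ and the dual function $\sum_i\alpha_i+\inf_w\Phi(w)$.

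To evaluate $\inf_w\Phi(w)$, I will decompose $w=r u$ with $r=\|w\|_2\ge 0$ and $\|u\|_2=1$. For fixed $r$, Cauchy--Schwarz shows that $-\langle d,w\rangle$ is minimized at $u=d/\|d\|_2$ (any $u$ works if $d=0$). This reduces the problem to the scalar minimization
\[
\min_{r\ge 0}\; \tfrac12 r^2-(\|d\|_2-s)r .
\]
This is a one-dimensional quadratic on a half-line. If $\|d\|_2\le s$, the minimizer is $r=0$ with value $0$. If $\|d\|_2>s$, the minimizer is $r=\|d\|_2-s$ with value $-\frac12(\|d\|_2-s)^2$. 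Both cases combine into $-\frac12[\|d\|_2-s]_+^2$, and the minimizer agrees with \eqref{eq:kkt_w} and \eqref{eq:w}, which is a useful consistency check. Substituting the definitions \eqref{eq:dc} of $d$ and $s$ gives exactly the objective in \eqref{eq:dual_problem}, maximized over $[0,C]^n$.

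Finally, I need to justify that this dual is \emph{the} dual, i.e.\ that strong duality holds so that $\alpha^\star$ is meaningful and the KKT relations link it to $w^\star$. The problem \eqref{eq:primal_socp} is convex. Slater's condition holds: take $w=0$, $t=1$, and $\xi_i$ large, which makes every inequality strict, including $\|w\|_2<t$. Hence the duality gap is zero and the dual optimum is attained; attainment also follows directly because the dual objective is continuous on the compact box. The existence of the primal solution and the relation between $w^\star$ and $\alpha^\star$ then follow from Theorem~\ref{thm:strongly-convex} and \eqref{eq:w}.

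The main obstacle is the handling of the nonnegativity constraints on $t$ and $\xi_i$. The paper's Lagrangian imposes them as domain restrictions rather than via explicit multipliers. Minimizing over $t\ge 0$ strictly gives the condition $s-\gamma\ge 0$ rather than equality. Since $\gamma$ appears only through that coefficient and the infimum over $w$ depends on $\gamma$, I must check that choosing $\gamma=s$ is optimal for the dual. Indeed, increasing $\gamma$ only increases $\Phi$ pointwise, so the dual value is maximized at the largest admissible $\gamma$, namely $\gamma=s$. I will argue this explicitly so that the resulting dual coincides with \eqref{eq:dual_problem} regardless of whether $t\ge 0$ is kept as a constraint.
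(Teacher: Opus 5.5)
Your proposal is correct and follows essentially the same route as the paper: you reduce $\inf_w \Phi(w)$ via Cauchy--Schwarz to a one-dimensional quadratic in $\|w\|_2$ on $[0,\infty)$, split on the sign of $\|d\|_2 - s$, and combine the two cases into $\sum_i \alpha_i - \tfrac12[\|d\|_2 - s]_+^2$, maximized over $[0,C]^n$. Your monotonicity argument is a welcome tightening: keeping $t \ge 0$ as a constraint yields only $\gamma \le s$, and you show $\gamma = s$ is then the best admissible choice, whereas the paper settles this step simply by imposing the stationarity condition \eqref{eq:kkt_t}.
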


\begin{proof}
We consider the following subproblem:
\begin{equation}
\inf_{w \in \mathbb{R}^d} \Phi(w)
=
\frac{1}{2}\|w\|_2^2 - \langle d, w \rangle + s \|w\|_2 .
\label{eq:infphi}
\end{equation}

By the Cauchy--Schwarz inequality, we have $\langle d, w \rangle \le \|d\|_2 \|w\|_2$. It follows that
\[
\Phi(w)
\ge
f(\tau)
\triangleq
\frac{1}{2}\tau^2 - \|d\|_2 \tau + s \tau,
\quad
\text{where }
\tau \triangleq \|w\|_2 \ge 0 .
\]

Equality holds when \( w \) is in the same direction as \( d \). Therefore, minimizing \( \Phi(w) \) over problem \eqref{eq:infphi} is equivalent to minimizing the one-dimensional function \( f(\tau) \) over the domain \( \tau \ge 0 \), i.e.,
\[
\inf_{w \in \mathbb{R}^d} \Phi(w)
=
\inf_{\tau \in \mathbb{R}_+} f(\tau).
\]

Since \( f(\tau) \) is a quadratic function of one variable defined on \( [0, +\infty) \), its minimum can be determined explicitly. In particular, the derivative of \( f(\tau) \) is given by $f'(\tau) = \tau - (\|d\|_2 - s)$, with stationary point $\tau^\star = \|d\|_2 - s$.
We now consider the following two cases:
\begin{itemize}
    \item If \( \|d\|_2 \le s \), then \( \tau^\star \le 0 \), and hence the optimum over the domain \( \tau \ge 0 \) is attained at \( \tau = 0 \),
        \[
        \inf_{w \in \mathbb{R}^d} \Phi(w)
        =
        f(0)
        =
        0 .
        \]
    \item If \( \|d\|_2 > s \), then \( \tau^\star = \|d\|_2 - s > 0 \). In this case,
        \begin{align*}
            \inf_{w \in \mathbb{R}^d} \Phi(w)
            =
            f(\tau^\star) 
            &=
            \frac{1}{2} (\tau^\star)^2
            -
            \bigl[\|d\|_2 \tau^\star - s \tau^\star \bigr] 
            =
            -\frac{1}{2} (\|d\|_2 - s)^2 .
        \end{align*}
\end{itemize}

Substituting the two cases above into \eqref{eq:dual_func}, the dual function can be written in the following form:
\[
\mathcal{D}(\alpha)
=
\begin{cases}
1^\top \alpha,
& \text{if } \|d\|_2 \le s, \\[0.5em]
1^\top \alpha
-
\frac{1}{2} (\|d\|_2 - s)^2,
& \text{if } \|d\|_2 > s .
\end{cases}
\]

Combining the two cases, we obtain
\begin{equation}
\mathcal{D}(\alpha)
=
1^\top \alpha
-
\frac{1}{2} [\|d\|_2 - s]_+^2,
\label{eq:dR-SVMfinal}
\end{equation}
where \( d \) and \( s \) are defined in \eqref{eq:dc}.

Finally, maximizing the dual function \eqref{eq:dR-SVMfinal} subject to the constraints \eqref{eq:kkt_xi}, we obtain the dual problem of the R-SVM:
\[
    \max_{\alpha \in \mathcal{Z}}
    \mathcal{D}(\alpha), \quad \mathcal{Z}  \left\{ \alpha \in \mathbb{R}^n \;\middle|\; 0 \le \alpha_i \le C \right\}.
\]

Since the feasible set \( \mathcal{Z} \) is compact and the dual function \eqref{eq:dR-SVMfinal} is continuous on this set, the dual problem always admits at least one optimal solution.

However, since the dual function \eqref{eq:dR-SVMfinal} is not strictly concave and is nondifferentiable at the optimum, the set of dual maximizers is not
necessarily a singleton. Therefore, the dual solution is described as a set of
optimal solutions rather than a unique point, which justifies the notation used
in \eqref{eq:dual_problem}.
\end{proof}

\subsection{Slater's Condition and Strong Duality}

Consider the primal SOCP~\eqref{eq:primal_socp} and its dual problem~\eqref{eq:dual_problem}.
Since the primal problem is convex and the dual is derived via the Lagrangian,
weak duality holds.
In particular, for any primal feasible point $(w,\xi,t)$ and any dual feasible
vector $\alpha$, we have
\begin{equation}
\label{eq:weak_duality}
\mathcal{D}(\alpha)
\;\le\;
\frac{1}{2}\|w\|_2^2 + C \sum_{i \in \mathcal{N}} \xi_i ,
\end{equation}
showing that the dual objective provides a global lower bound on the primal
objective value.

To establish strong duality, we verify Slater’s condition for the primal problem.
Consider $\bar{w} = 0$, $\bar{t} > 0$ and sufficiently large $\bar{\xi}_i$ such that $1 + \rho_i \bar{t} - \bar{\xi}_i < 0$, $\forall i \in \mathcal{N}$.
Then all constraints of \eqref{eq:primal_socp} are strictly satisfied:
\[
\begin{aligned}
&1 - y_i\langle \bar{w}, \tilde{x}_i \rangle + \rho_i \bar{t} - \bar{\xi}_i < 0, \quad \forall i \in \mathcal{N}, \\
&\bar{\xi}_i > 0, \quad \forall i \in \mathcal{N}, \\
&\|\bar{w}\|_2 = 0 < \bar{t}.
\end{aligned}
\]
Hence, Slater’s condition holds for the primal problem.



\subsection{Karush--Kuhn--Tucker Conditions}
\label{subsec:kkt}

Under strong duality, the KKT conditions provide a complete characterization of optimality for the R-SVM.
Let $(w^\star,\xi^\star,t^\star)$ and $(\alpha^\star,\mu^\star,\gamma^\star)$ denote a primal--dual optimal solution pair.

The stationarity conditions with respect to the primal variables $(w,\xi,t)$, together with dual feasibility, have already been derived in  \eqref{eq:dual_feas} and \eqref{eq:kkt_stationarity_primal}.

In addition, primal feasibility requires
\begin{subequations}
\label{eq:kkt_primal_feas_recall}
\begin{align}
y_i \langle w^{\star}, \tilde{x}_i \rangle - \rho_i t^\star + \xi_i^\star &\ge 1, \label{eq:kkt_pf_a}\\
\xi_i^\star &\ge 0, \label{eq:kkt_pf_b}\\
t^\star - \|w^\star\|_2 &\ge 0. \label{eq:kkt_pf_c}
\end{align}
\end{subequations}

The complementary slackness conditions are given by
\begin{subequations}
\label{eq:kkt_complementarity}
\begin{align}
\alpha_i^\star
\big(
y_i \langle w^{\star}, \tilde{x}_i \rangle
- \rho_i t^\star
+ \xi_i^\star
- 1
\big) &= 0, \label{eq:kkt_cs_a}\\
\mu_i^\star \xi_i^\star &= 0, \label{eq:kkt_cs_b}\\
\gamma^\star (t^\star - \|w^\star\|_2) &= 0. \label{eq:kkt_cs_c}
\end{align}
\end{subequations}

Together, these conditions determine the exact primal--dual structure of the optimal solution.
In particular, they allow the dual status of each sample to be characterized explicitly, which forms the foundation of the safe screening rules developed in the next section.


\section{Safe Screening for R-SVM}
\label{sec:safescreening}

Safe screening aims to accelerate large-scale SVM training by identifying, in advance, training samples that are guaranteed to be either correctly classified or misclassified by margin hyperplanes. This reduces the dual problem size and computational complexity while preserving optimality. In the following, we first present an ideal safe screening rule, followed by a practical variant that relies on the construction of a GAP ball.

\subsection{Ideal Safe Screening}
\label{sec:ideal_safe}


We first observe the relationship between the optimal dual solution $ \alpha^\star $ and the primal variables $ w^\star $. Specifically, if $ \alpha_i^\star = 0 $, then the $ i $-th sample, including the perturbation, is correctly classified by the margin hyperplane, i.e., $y_i  \langle w^\star, \tilde{x}_i \rangle 
- \rho_i \| w^\star \|_2 \geq 1$.
On the other hand, if $ \alpha_i^\star = C $, then the $ i $-th sample, including the perturbation, is guaranteed to be misclassified by the margin hyperplane. Further details of this relationship are established in the following lemma.

\begin{lemma}
\label{lem:dual_to_margin}
At the optimal solution $w^\star$ of the R-SVM, define
\[
\psi_i^\star \triangleq y_i\langle w^\star, \tilde{x}_i \rangle
- \rho_i \|w^\star\|_2 .
\]
Then the optimal dual variable $\alpha_i^\star$ satisfies
\[
\alpha_i^\star
\begin{cases}
= 0 
&\Rightarrow\ \psi_i^\star \ge 1, \\[0.4em]
\in (0,C) 
&\Rightarrow\ \psi_i^\star = 1, \\[0.4em]
= C 
&\Rightarrow\ \psi_i^\star \le 1 .
\end{cases}
\]
\end{lemma}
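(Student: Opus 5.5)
The argument is a direct case analysis of the KKT system of Section~\ref{subsec:kkt}. Strong duality holds (Slater's condition was verified), so a primal--dual optimal pair $(w^\star,\xi^\star,t^\star)$, $(\alpha^\star,\mu^\star,\gamma^\star)$ satisfies stationarity \eqref{eq:kkt_stationarity_primal}, primal feasibility \eqref{eq:kkt_primal_feas_recall}, and complementary slackness \eqref{eq:kkt_complementarity}.

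The first step is to replace $t^\star$ by $\|w^\star\|_2$ in the margin constraint. If $\rho_i>0$ for some $i$ with $\alpha_i^\star>0$, then $\gamma^\star=s>0$ by \eqref{eq:kkt_t}, and \eqref{eq:kkt_cs_c} gives $t^\star=\|w^\star\|_2$. The cleaner route avoids this case split. At optimality $t^\star$ may be taken equal to $\|w^\star\|_2$: decreasing $t$ to $\|w\|_2$ keeps every constraint feasible because $\rho_i\ge 0$, and it does not increase the objective, since $\xi_i$ can only decrease. We therefore fix the primal optimum with $t^\star=\|w^\star\|_2$. Because $w^\star$ is unique by the strong convexity in Theorem~\ref{thm:strongly-convex}, and $\xi_i^\star=[1-\psi_i^\star]_+$ is forced, this primal solution is unique, and the KKT conditions hold for it paired with any dual optimum. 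Thus \eqref{eq:kkt_pf_a} and \eqref{eq:kkt_cs_a} become $\psi_i^\star+\xi_i^\star\ge 1$ and $\alpha_i^\star(\psi_i^\star+\xi_i^\star-1)=0$.

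The three cases then follow from $\mu_i^\star=C-\alpha_i^\star$ in \eqref{eq:kkt_xi} together with \eqref{eq:kkt_cs_b}.
\begin{itemize}
\item If $\alpha_i^\star=0$, then $\mu_i^\star=C>0$, so $\xi_i^\star=0$, and primal feasibility gives $\psi_i^\star\ge 1$.
\item If $0<\alpha_i^\star<C$, then $\mu_i^\star>0$ forces $\xi_i^\star=0$, and $\alpha_i^\star>0$ forces $\psi_i^\star+\xi_i^\star=1$. Hence $\psi_i^\star=1$.
\item If $\alpha_i^\star=C$, complementary slackness gives $\psi_i^\star=1-\xi_i^\star\le 1$, since $\xi_i^\star\ge 0$.
\end{itemize}

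The main obstacle is justifying $t^\star=\|w^\star\|_2$ rigorously, since \eqref{eq:kkt_cs_c} alone yields it only when $\gamma^\star>0$. I would handle this with the reduction argument above: choose the primal optimum with a tight cone constraint, and note that the KKT conditions hold for every primal--dual optimal pair under strong duality.
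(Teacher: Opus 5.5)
Your proof is correct and follows the paper's argument: the same three-case analysis using $\mu_i^\star = C-\alpha_i^\star$, the complementary slackness conditions \eqref{eq:kkt_cs_a}--\eqref{eq:kkt_cs_b}, and primal feasibility \eqref{eq:kkt_pf_a}, with the cone constraint taken to be tight. Your reduction argument makes rigorous the step the paper only asserts ``without loss of generality'': lowering $t$ to $\|w\|_2$ preserves feasibility and optimality, and the KKT conditions hold for any primal optimum paired with any dual optimum.
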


\begin{proof}
From the stationarity condition~\eqref{eq:kkt_xi}, we have
$\mu_i^\star = C - \alpha_i^\star \ge 0$.
Together with primal feasibility and complementarity slackness
conditions~\eqref{eq:kkt_pf_a}--\eqref{eq:kkt_cs_b},
this yields the following characterization.

\begin{itemize}
\item If $\alpha_i^\star = 0$, then $\mu_i^\star = C > 0$ and
$\xi_i^\star = 0$ by~\eqref{eq:kkt_cs_b}.
Primal feasibility~\eqref{eq:kkt_pf_a} implies $\psi_i^\star \ge 1$.

\item If $0 < \alpha_i^\star < C$, then $\mu_i^\star > 0$ yields
$\xi_i^\star = 0$, and complementarity slackness~\eqref{eq:kkt_cs_a}
enforces $\psi_i^\star = 1$.

\item If $\alpha_i^\star = C$, then $\mu_i^\star = 0$ and, by primal feasibility implies $\xi_i^\star \ge 0$. With~\eqref{eq:kkt_cs_a}, this yields $\psi_i^\star \le 1$.
\end{itemize}

Finally, since the constraint $\|w\|_2 \le t$ is tight at optimality without loss of generality, $\psi_i^\star$ is well defined, and the stated relationships between $\alpha_i^\star$ and $\psi_i^\star$ follow.
\end{proof}

The preceding lemma clarifies the relationship between the optimal dual variables $\alpha_i^\star$ and $\psi_i^\star$ at the R-SVM optimum. In particular, $\psi_i^\star = 1$ corresponding to support vectors lying on the margin, while correctly classified samples with $\psi_i^\star > 1$ and misclassified samples with $\psi_i^\star < 1$ allow one to determine the values of the corresponding optimal dual variables. When many dual variables $\alpha_i^\star$ can be identified in this way, the size of the dual problem can be significantly reduced. This forms the basis of the ideal safe sample screening rule stated in the following theorem.

\begin{theorem}[Ideal safe sample screening]
\label{thm:ideal_safe}
At the optimal solution $w^\star$ of the R-SVM, the following
implications hold for each training sample $i$:
\[
\begin{aligned}
&\psi_i^\star > 1 \quad \Rightarrow \quad \alpha_i^\star = 0, 
\\
&\psi_i^\star < 1 \quad \Rightarrow \quad \alpha_i^\star = C.
\end{aligned}
\]
\end{theorem}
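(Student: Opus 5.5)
The plan is to obtain Theorem~\ref{thm:ideal_safe} as the contrapositive of Lemma~\ref{lem:dual_to_margin}, using the fact that the three cases $\alpha_i^\star=0$, $\alpha_i^\star\in(0,C)$ and $\alpha_i^\star=C$ exhaust all possibilities. By the stationarity condition~\eqref{eq:kkt_xi}, every optimal multiplier satisfies $0\le\alpha_i^\star\le C$, so exactly one of these cases holds for each sample $i$. The argument is a case elimination.

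\textbf{First implication.} Suppose $\psi_i^\star>1$. If $\alpha_i^\star\in(0,C)$, Lemma~\ref{lem:dual_to_margin} would force $\psi_i^\star=1$. If $\alpha_i^\star=C$, it would force $\psi_i^\star\le 1$. Both contradict $\psi_i^\star>1$, so the only remaining case is $\alpha_i^\star=0$.

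\textbf{Second implication.} Suppose $\psi_i^\star<1$. If $\alpha_i^\star=0$, the lemma gives $\psi_i^\star\ge1$. If $\alpha_i^\star\in(0,C)$, it gives $\psi_i^\star=1$. Both are contradictions, so $\alpha_i^\star=C$.

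Before this, I would confirm that $\psi_i^\star$ in Lemma~\ref{lem:dual_to_margin} is computed with the true optimum. The lemma's proof uses $t^\star=\|w^\star\|_2$, and this needs a justification. By complementarity~\eqref{eq:kkt_cs_c} and~\eqref{eq:kkt_t}, we have $\gamma^\star=s^\star=\sum_i\alpha_i^\star\rho_i$, and:
\begin{itemize}
\item if $\gamma^\star>0$, the cone constraint is active, so $t^\star=\|w^\star\|_2$;
\item if $\gamma^\star=0$, then $\alpha_i^\star\rho_i=0$ for all $i$, and decreasing $t$ to $\|w^\star\|_2$ preserves feasibility and optimality, since $t$ enters the constraints only through $\rho_i t$ with $\rho_i\ge 0$.
\end{itemize}
A more direct route is also available. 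Since $\mathcal{P}$ is strongly convex (Theorem~\ref{thm:strongly-convex}), $w^\star$ is unique, and any optimal SOCP triple with a smaller $t$ remains optimal. So we may take $t^\star=\|w^\star\|_2$, under which $\psi_i^\star$ matches the quantity in the lemma.

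The main obstacle is this subtle point about $t^\star$, together with making sure the relations from the lemma hold for every dual optimal $\alpha^\star$ paired with $w^\star$. The dual solution need not be unique, but KKT holds for any primal--dual optimal pair because Slater's condition gives strong duality. The conclusion therefore applies to every $\alpha^\star$ in the dual solution set. Once this is settled, the theorem follows immediately from the case elimination above.
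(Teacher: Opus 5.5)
Your proposal is correct and matches the paper's route: the paper proves the theorem in one line by saying it follows directly from Lemma~\ref{lem:dual_to_margin}, and your case elimination over the exhaustive cases $\alpha_i^\star=0$, $\alpha_i^\star\in(0,C)$, $\alpha_i^\star=C$ is exactly that contrapositive made explicit. Your extra argument is also sound and fills in what the paper's lemma only asserts ``without loss of generality'': that one may take $t^\star=\|w^\star\|_2$, and that the KKT conditions hold for every dual maximizer by strong duality.
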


\begin{proof}
The result follows directly from Lemma~\ref{lem:dual_to_margin}. 
\end{proof}

Theorem~\ref{thm:ideal_safe} shows that the boundary value of $\alpha_i^\star$ can be determined by the $\psi_i^\star$, which in turn depends on the exact optimal primal solution $w^\star$.
However, identifying $\psi_i^\star$ exactly requires solving the optimization problem to full optimality.

In practice, instead of accessing $w^\star$ directly, we construct a region that is guaranteed to contain the optimal primal solution and use it to estimate $\psi_i^\star$ in a worst-case sense. If a sample lies strictly on one side of the classification margin for all $w$ in this safe region, then its corresponding dual status can be determined with certainty without solving the problem to completion.

\subsection{Practical Safe Screening Rule}

\label{sec:practical_safe}

We derive a practical safe screening rule for the R-SVM based on the primal--dual structure and strong convexity of the objective, following the GAP ball framework~\citep{ndiaye2017gap}. The construction relies on bounding the distance between the current iterate and the optimal solution.


\begin{lemma}[Strong convexity]
\label{lem:strongconvexity}
Consider the primal R-SVM objective $\mathcal{P}(w)$ defined in~\eqref{eq:robustprimal}. Then $\mathcal{P}(w)$ is $1$-strongly convex with respect to $w$, and the optimal solution $w^\star$ satisfies
\begin{equation}
\label{eq:strongconvexity_ineq}
\mathcal{P}(w) \ge \mathcal{P}(w^\star)
+ \frac{1}{2}\|w - w^\star\|_2^2,
\quad \forall w \in \mathbb{R}^d.
\end{equation}
\end{lemma}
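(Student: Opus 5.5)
The plan is to take strong convexity directly from Theorem~\ref{thm:strongly-convex}, and then turn it into the quadratic growth bound \eqref{eq:strongconvexity_ineq} using the first-order optimality condition at $w^\star$.

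\textbf{Step 1 (strong convexity).} Write $\mathcal{P}(w) = \frac{1}{2}\|w\|_2^2 + h(w)$ with
\[
h(w) \triangleq C\sum_{i\in\mathcal{N}}\bigl[1 - y_i\langle w,\tilde{x}_i\rangle + \rho_i\|w\|_2\bigr]_+ .
\]
As argued in Theorem~\ref{thm:strongly-convex}, $h$ is convex and finite on $\mathbb{R}^d$. It is a nonnegative sum of compositions of the nondecreasing convex map $[\cdot]_+$ with convex functions, because $\rho_i\ge 0$. Hence $\mathcal{P}(w) - \frac{1}{2}\|w\|_2^2 = h(w)$ is convex, which is exactly the definition of $1$-strong convexity.

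\textbf{Step 2 (existence and uniqueness of $w^\star$).} $\mathcal{P}$ is continuous. It is coercive, since $\mathcal{P}(w)\ge \frac{1}{2}\|w\|_2^2$, and it is strictly convex. So a unique minimizer $w^\star$ exists, and the statement is well posed.

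\textbf{Step 3 (quadratic growth).} Since $h$ is a finite convex function on $\mathbb{R}^d$, the subdifferential sum rule gives $\partial\mathcal{P}(w) = w + \partial h(w)$. Fermat's rule at $w^\star$ gives $0 \in w^\star + \partial h(w^\star)$, so $g \triangleq -w^\star \in \partial h(w^\star)$. The subgradient inequality then gives $h(w) \ge h(w^\star) - \langle w^\star, w - w^\star\rangle$ for all $w$. We add the exact identity
\[
\tfrac{1}{2}\|w\|_2^2 = \tfrac{1}{2}\|w^\star\|_2^2 + \langle w^\star, w-w^\star\rangle + \tfrac{1}{2}\|w-w^\star\|_2^2 .
\]
The linear terms cancel, and we obtain \eqref{eq:strongconvexity_ineq}.

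An alternative to Step 3 avoids subdifferentials. Apply the strong convexity inequality
\[
\mathcal{P}(\lambda w + (1-\lambda)w^\star) \le \lambda\mathcal{P}(w) + (1-\lambda)\mathcal{P}(w^\star) - \tfrac{\lambda(1-\lambda)}{2}\|w-w^\star\|_2^2 .
\]
Use that the left side is at least $\mathcal{P}(w^\star)$, divide by $\lambda$, and let $\lambda\to 0^+$.

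\textbf{Main obstacle.} The only delicate point is that $\mathcal{P}$ is nondifferentiable, through $\|w\|_2$ at $0$ and through the hinge kinks. So one cannot write $\nabla\mathcal{P}(w^\star)=0$ and must justify the subdifferential sum rule, or else use the limiting argument. I would present the convex-combination limit version, since it needs nothing beyond the definition of strong convexity and the minimality of $w^\star$.
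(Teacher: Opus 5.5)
Your proposal is correct and follows essentially the paper's argument: both split $\mathcal{P}$ into $\frac{1}{2}\|w\|_2^2$ plus a convex robust-hinge term, then combine the first-order condition $0\in\partial\mathcal{P}(w^\star)$ with the subgradient form of $1$-strong convexity. Your Step~3 only unpacks that form via the sum rule and the exact quadratic identity, while your preferred convex-combination limit avoids subdifferentials altogether and your Step~2 makes explicit the existence and uniqueness of $w^\star$, which the paper leaves implicit.
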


\begin{proof}
The quadratic regularization term $\frac{1}{2}\|w\|_2^2$ is $1$-strongly convex with respect to $w$, while each robust hinge loss $\ell_i^{\rm rob}$ is convex in $w$ by assumption. Therefore, their sum $\mathcal{P}(w)$ is $1$-strongly convex with respect to $w$.

By the definition of strong convexity, for any $w$ and any subgradient
$g^\star \in \partial_w \mathcal{P}(w^\star)$, we have
\[
\mathcal{P}(w)
\ge
\mathcal{P}(w^\star)
+ \langle g^\star, w - w^\star \rangle
+ \frac{1}{2}\|w - w^\star\|_2^2.
\]
At the optimal solution, the first-order optimality condition yields
$0 \in \partial_w \mathcal{P}(w^\star)$,  so we may choose $g^\star = 0$, which implies $\langle g^\star, w - w^\star \rangle = 0$.
Substituting this into the above inequality yields~\eqref{eq:strongconvexity_ineq}.
\end{proof}

Lemma~\ref{lem:strongconvexity} provides a quadratic lower bound on the primal
objective around the optimum, which allows us to control the distance between
any candidate solution and $w^\star$.
By combining this property with the primal--dual gap, we obtain a computable
region guaranteed to contain the optimal solution, known as the GAP ball. Here, we denote by $\Theta(c, R)$ the ball with center $c \in \mathbb{R}^d$ and radius $R \geq 0$.

\begin{lemma}[GAP ball]
\label{thm:gapsafeball}
At iteration $k$ of some iterative solving method, let the current primal solution be $w^{(k)}$ and the dual solution be $\alpha^{(k)}$. The duality gap is defined as $\text{G}^{(k)} \triangleq \mathcal{P}(w^{(k)}) - \mathcal{D}(\alpha^{(k)})$. Then the optimal solution $w^\star$ always lies in the Euclidean ball
\begin{equation}
\label{eq:radius}
w^\star \in \Theta^{(k)} \triangleq \Theta\Big(w^{(k)}, R^{(k)} \Big), \quad 
R^{(k)} \triangleq \sqrt{2 \cdot \text{G}^{(k)}}.
\end{equation}
\end{lemma}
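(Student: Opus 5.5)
The plan is to combine the quadratic growth bound of Lemma~\ref{lem:strongconvexity} with weak duality. The only subtlety is linking the dual objective $\mathcal{D}$ of \eqref{eq:dual_problem} to the optimal value of the primal objective $\mathcal{P}$ in \eqref{eq:robustprimal}, rather than to the SOCP objective \eqref{eq:primal_socp}. We assume throughout that $\alpha^{(k)}$ is dual feasible, i.e.\ $\alpha^{(k)}\in[0,C]^n$; otherwise $\mathcal{D}(\alpha^{(k)})$ carries no information, and the iterative method is understood to maintain feasibility, for instance by clipping.

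\textbf{Step 1: match the two problems.} We first show that $\min_w \mathcal{P}(w)$ equals the optimal value of the SOCP \eqref{eq:primal_socp}.
\begin{itemize}
\item Given any $w$, the choice $t=\|w\|_2$ and $\xi_i=\bigl[1-y_i\langle w,\tilde{x}_i\rangle+\rho_i\|w\|_2\bigr]_+$ is feasible for \eqref{eq:primal_socp}, and its objective value is exactly $\mathcal{P}(w)$.
\item Conversely, take any feasible $(w,\xi,t)$. Since $t\ge\|w\|_2$ and $\rho_i\ge0$, we have $\xi_i\ge 1-y_i\langle w,\tilde{x}_i\rangle+\rho_i t\ge 1-y_i\langle w,\tilde{x}_i\rangle+\rho_i\|w\|_2$. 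Together with $\xi_i\ge0$, this gives $\xi_i\ge\ell_i^{\rm rob}(w)$, so the SOCP objective is at least $\mathcal{P}(w)$.
\end{itemize}
Hence the two optimal values coincide, and both are attained at $w^\star$.

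\textbf{Step 2: weak duality for $\mathcal{P}$.} Apply the weak duality inequality \eqref{eq:weak_duality} to the feasible point constructed from $w^\star$. This yields $\mathcal{D}(\alpha)\le\mathcal{P}(w^\star)$ for every $\alpha\in[0,C]^n$. Strong duality via Slater is not needed here; weak duality suffices.

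\textbf{Step 3: the radius bound.} Evaluate \eqref{eq:strongconvexity_ineq} at $w=w^{(k)}$:
\[
\tfrac12\|w^{(k)}-w^\star\|_2^2\le \mathcal{P}(w^{(k)})-\mathcal{P}(w^\star)\le \mathcal{P}(w^{(k)})-\mathcal{D}(\alpha^{(k)})=\text{G}^{(k)}.
\]
This also shows $\text{G}^{(k)}\ge0$, so $R^{(k)}$ is well defined. Taking square roots gives $\|w^{(k)}-w^\star\|_2\le\sqrt{2\,\text{G}^{(k)}}=R^{(k)}$, which is the claimed membership $w^\star\in\Theta^{(k)}$.

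The main obstacle is Step 1. The weak duality statement \eqref{eq:weak_duality} was phrased for SOCP-feasible triples, so we must confirm that the lifted point built from $w^\star$ is SOCP-feasible, which in turn uses $\rho_i\ge0$. Once this reduction is in place, the rest is the standard GAP ball argument.
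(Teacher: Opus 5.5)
Your proof is correct and follows the same route as the paper: you evaluate the quadratic-growth inequality of Lemma~\ref{lem:strongconvexity} at $w^{(k)}$ and bound the suboptimality $\mathcal{P}(w^{(k)})-\mathcal{P}(w^\star)$ by the duality gap. The only difference is how you justify $\mathcal{D}(\alpha^{(k)})\le\mathcal{P}(w^\star)$: you use weak duality directly, after checking that the SOCP and $\mathcal{P}$ have the same optimal value, while the paper invokes strong duality $\mathcal{P}(w^\star)=\mathcal{D}(\alpha^\star)$ together with $\mathcal{D}(\alpha^{(k)})\le\mathcal{D}(\alpha^\star)$. Your version is slightly more economical, and it states explicitly the dual-feasibility assumption $\alpha^{(k)}\in[0,C]^n$ that both arguments need.
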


\begin{proof}
Consider the current primal iterate $w^{(k)}$ at iteration $k$, recovered from the dual via \eqref{eq:w}. From Lemma~\ref{lem:strongconvexity}, we have \(
\frac{1}{2} \|w^{(k)} - w^\star\|_2^2 \le \mathcal{P}(w^{(k)}) - \mathcal{P}(w^\star). 
\) 
Moreover, by strong duality, we have $\mathcal{P}(w^\star)=\mathcal{D}(\alpha^\star)$. Since $\mathcal{D}(\alpha^{(k)}) \le \mathcal{D}(\alpha^\star)$, it follows that
$
\mathcal{P}(w^{(k)}) - \mathcal{P}(w^\star) \le \mathcal{P}(w^{(k)}) - \mathcal{D}(\alpha^{(k)}) = \text{G}^{(k)}
$.
Combining the above inequalities, we obtain
\begin{equation*}
\frac{1}{2} \|w^{(k)} - w^\star\|_2^2 \le \text{G}^{(k)} \implies
\|w^\star - w^{(k)}\|_2 \le R^{(k)} = \sqrt{2 \cdot \text{G}^{(k)}}.
\end{equation*}

This shows that the optimal solution $w^\star$ is guaranteed to lie within the
GAP ball $\Theta^{(k)}$ defined in~\eqref{eq:radius}.
\end{proof}



Given the safe region $\Theta^{(k)}$, we now derive deterministic bounds on $\psi_i^\star$. Since $w^\star$ is unknown but satisfies $w^\star \in \Theta^{(k)}$, it suffices to bound the margin uniformly over this region.

\begin{lemma}[Safe Bounds on $\psi_i^\star$]
\label{thr:rob_bounds}
Define
\begin{align}
    \label{eq:lrob}
    \text{LB}_i &\triangleq y_i \langle w^{(k)}, \tilde{x}_i \rangle
 - \rho_i \big( \|w^{(k)}\|_2 + R^{(k)} \big) - R^{(k)} \|\tilde{x}_i\|_2 \in \mathbb{R}, \\
    \label{eq:urob}
    \text{UB}_i &\triangleq y_i \langle w^{(k)}, \tilde{x}_i \rangle  - \rho_i \big[ \|w^{(k)}\|_2 - R^{(k)} \big]_+ + R^{(k)} \|\tilde{x}_i\|_2 \in \mathbb{R},
\end{align}
Consequently, for \( w^\star \in \Theta^{(k)} \),
\begin{equation}
     \label{eq:rob_margin_interval}
     \psi_i^\star \in [\,\text{LB}_i, \text{UB}_i\,], \quad \forall i \in \mathcal{N}.  
\end{equation}
\end{lemma}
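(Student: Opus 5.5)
Write $w^\star = w^{(k)} + h$ with $\|h\|_2 \le R^{(k)}$. This is exactly what Lemma~\ref{thm:gapsafeball} provides. We then bound the two pieces of
\[
\psi_i^\star = y_i\langle w^\star, \tilde{x}_i\rangle - \rho_i\|w^\star\|_2
\]
separately over this ball. Since $\rho_i \ge 0$, a lower bound on $\psi_i^\star$ needs a lower bound on the linear term and an upper bound on $\|w^\star\|_2$. An upper bound on $\psi_i^\star$ needs the reverse pair.

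\textbf{Linear term.} Expand $y_i\langle w^\star,\tilde{x}_i\rangle = y_i\langle w^{(k)},\tilde{x}_i\rangle + y_i\langle h,\tilde{x}_i\rangle$. Since $|y_i| = 1$, Cauchy--Schwarz gives $|y_i\langle h,\tilde{x}_i\rangle| \le \|h\|_2\|\tilde{x}_i\|_2 \le R^{(k)}\|\tilde{x}_i\|_2$. Hence
\[
y_i\langle w^{(k)},\tilde{x}_i\rangle - R^{(k)}\|\tilde{x}_i\|_2 \;\le\; y_i\langle w^\star,\tilde{x}_i\rangle \;\le\; y_i\langle w^{(k)},\tilde{x}_i\rangle + R^{(k)}\|\tilde{x}_i\|_2 .
\]

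\textbf{Norm term.} By the triangle inequality, $\|w^\star\|_2 \le \|w^{(k)}\|_2 + R^{(k)}$. The reverse triangle inequality gives $\|w^\star\|_2 \ge \|w^{(k)}\|_2 - R^{(k)}$. Combined with $\|w^\star\|_2 \ge 0$, this yields $\|w^\star\|_2 \ge [\|w^{(k)}\|_2 - R^{(k)}]_+$.

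\textbf{Combination.} Multiply the norm bounds by $-\rho_i \le 0$, which flips their direction, and add them to the linear-term bounds. The lower linear bound plus $-\rho_i(\|w^{(k)}\|_2 + R^{(k)})$ gives $\text{LB}_i \le \psi_i^\star$. The upper linear bound plus $-\rho_i[\|w^{(k)}\|_2 - R^{(k)}]_+$ gives $\psi_i^\star \le \text{UB}_i$. This establishes \eqref{eq:rob_margin_interval} for every $i \in \mathcal{N}$.

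There is no real obstacle here; the argument is two elementary inequalities. The only point needing care is the sign bookkeeping: because $\rho_i \ge 0$, the norm bounds must enter in the opposite direction to the linear bounds. The positive part in $\text{UB}_i$ comes from the nonnegativity of the norm, and it keeps the bound valid, and tighter, when the ball $\Theta^{(k)}$ contains the origin.
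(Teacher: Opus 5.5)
Your proposal is correct and follows essentially the same route as the paper: Cauchy--Schwarz bounds the linear term over the GAP ball, the (reverse) triangle inequality with truncation at zero bounds $\|w^\star\|_2$, and the two are recombined using $\rho_i \ge 0$. Your explicit sign bookkeeping when multiplying the norm bounds by $-\rho_i$ is slightly more careful than the paper's one-line ``substituting'' step.
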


\begin{proof}
By Lemma~\ref{thm:gapsafeball}, the optimal solution satisfies $w^\star \in \Theta^{(k)}$. Therefore, it suffices to bound $\psi_i$ uniformly over $\Theta^{(k)}$. We decompose the $\psi_i^\star$ as
\(
\psi_i^\star
=
y_i\langle w^\star, \tilde{x}_i \rangle
- \rho_i \|w^\star\|_2,
\)
and bound each term separately. First, by linearity of the inner product, we have
\[
y_i \langle w^\star, \tilde{x}_i \rangle
= y_i \langle w^{(k)}, \tilde{x}_i \rangle + y_i \langle w^\star - w^{(k)}, \tilde{x}_i  \rangle.
\]
By the Cauchy--Schwarz inequality, $\big| y_i \langle w^\star - w^{(k)}, \tilde{x}_i  \rangle \big|
\le   \|w^\star - w^{(k)}\|_2 \, \|\tilde{x}_i\|_2
\le R^{(k)} \|\tilde{x}_i\|_2.$
Therefore,
\begin{equation}
\label{eq:ywx}
    y_i \langle w^\star, \tilde{x}_i \rangle
    \in
    \Big[
    y_i \langle w^{(k)}, \tilde{x}_i \rangle  - R^{(k)} \|\tilde{x}_i\|_2,
    \;
    y_i \langle w^{(k)}, \tilde{x}_i \rangle + R^{(k)} \|\tilde{x}_i\|_2
    \Big].
\end{equation}

We next consider the norm term $\rho_i \|w^\star\|_2$ in $\psi_i^\star$. By the triangle inequality,
\[
\bigl| \|w^\star\|_2 - \|w^{(k)}\|_2 \bigr| \le \|w^\star - w^{(k)}\|_2 \le R^{(k)}.
\]
It follows that $\max_{w \in \Theta^{(k)}} \|w\|_2 = \|w^{(k)}\|_2 + R^{(k)}$, and $\min_{w \in \Theta^{(k)}} \|w\|_2 = \bigl[ \|w^{(k)}\|_2 - R^{(k)} \bigr]_+$\footnote{
Since $\|w\|_2$ is always nonnegative. Thus, when the bound $\|w^{(k)}\|_2 - R^{(k)}$ becomes negative, it is truncated to zero. Geometrically, this corresponds to the distance from the origin to the ball $\Theta^{(k)}$.
}.
Hence,
\begin{equation}
    \label{eq:rhow}
    \rho_i \|w^\star\|_2 \in
    \Bigl[
    \rho_i \bigl[ \|w^{(k)}\|_2 - R^{(k)} \bigr]_+,
    \;
    \rho_i \bigl( \|w^{(k)}\|_2 + R^{(k)} \bigr)
    \Bigr].
\end{equation}

Substituting \eqref{eq:ywx} and \eqref{eq:rhow} into the definition of $\psi_i^\star$, we directly obtain \eqref{eq:lrob} and \eqref{eq:urob} as lower and upper bound for $\psi_i^\star$, hence
\eqref{eq:rob_margin_interval}.
\end{proof}

Lemma~\ref{thr:rob_bounds} establishes a provably valid interval $[\,\text{LB}_i, \text{UB}_i\,]$ for the unknown value of $\psi_i^\star$ for each sample, without requiring access to the optimal solution $w^\star$. This interval characterization is central to the design of \emph{safe screening rules} for R-SVM training, as it enables early identification of samples whose contribution to the objective is guaranteed to be inactive under the current iterate $w^{(k)}$.

\begin{theorem}[Practical Safe Screening Rules]
\label{thm:safe_screening_rob}
Based on the safe bounds of $\psi_i^\star$ over the GAP ball,
the optimal dual variables $\alpha_i^\star$ for R-SVM satisfy:
\begin{equation}
    \begin{cases}
        \text{LB}_i > 1 & \Rightarrow \alpha_i^\star = 0, \\ 
        \text{UB}_i < 1 & \Rightarrow \alpha_i^\star = C, \\
        \text{otherwise}
            &\Rightarrow \alpha_i^\star \in [0,C].
    \end{cases}
\end{equation}


\end{theorem}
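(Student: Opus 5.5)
The argument chains Lemma~\ref{thr:rob_bounds} with Theorem~\ref{thm:ideal_safe}. No new estimates are required; the only task is to check that the hypotheses of the ideal rule are met by the computable interval $[\text{LB}_i,\text{UB}_i]$.

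First, by Lemma~\ref{thm:gapsafeball}, the optimal primal solution satisfies $w^\star \in \Theta^{(k)}$ for any primal--dual pair $(w^{(k)},\alpha^{(k)})$ produced by the solver. The radius $R^{(k)}=\sqrt{2\,\text{G}^{(k)}}$ is well defined because weak duality \eqref{eq:weak_duality} gives $\text{G}^{(k)}\ge 0$. Here I would briefly note that $\mathcal{P}(w^{(k)})$ upper-bounds the SOCP value at the feasible point with $t=\|w^{(k)}\|_2$ and $\xi_i$ equal to the robust hinge values. Lemma~\ref{thr:rob_bounds} then yields $\text{LB}_i \le \psi_i^\star \le \text{UB}_i$ for every $i\in\mathcal{N}$. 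Since $\mathcal{P}$ is strongly convex (Theorem~\ref{thm:strongly-convex}), $w^\star$ is unique, so $\psi_i^\star$ is a single well-defined number.

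Next, I would split into the three cases. If $\text{LB}_i>1$, then $\psi_i^\star\ge \text{LB}_i>1$, and the first implication of Theorem~\ref{thm:ideal_safe} gives $\alpha_i^\star=0$. If $\text{UB}_i<1$, then $\psi_i^\star\le\text{UB}_i<1$, and the second implication gives $\alpha_i^\star=C$. In the remaining case, only dual feasibility \eqref{eq:kkt_xi} is available, which gives $\alpha_i^\star\in[0,C]$. The first two cases cannot hold at the same time, since $\text{LB}_i\le\text{UB}_i$.

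The main obstacle is not in this chaining but in how Theorem~\ref{thm:ideal_safe} follows from Lemma~\ref{lem:dual_to_margin}. The lemma states implications from $\alpha_i^\star$ to $\psi_i^\star$, while the theorem needs the contrapositive direction. I would justify it by exhaustion over the three mutually exclusive values $\alpha_i^\star=0$, $\alpha_i^\star\in(0,C)$, and $\alpha_i^\star=C$.

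Take $\psi_i^\star>1$. Then $\alpha_i^\star\in(0,C)$ is impossible, since that case forces $\psi_i^\star=1$. Now suppose $\alpha_i^\star=C$. Then $\mu_i^\star=0$, and complementary slackness \eqref{eq:kkt_cs_a} forces $\psi_i^\star+\xi_i^\star=1$, using $t^\star=\|w^\star\|_2$. This contradicts $\psi_i^\star>1$ together with $\xi_i^\star\ge0$.

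Take $\psi_i^\star<1$. Primal feasibility \eqref{eq:kkt_pf_a} forces $\xi_i^\star>0$. Then \eqref{eq:kkt_cs_b} gives $\mu_i^\star=0$, and hence $\alpha_i^\star=C$ by \eqref{eq:kkt_xi}.

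One delicate point is the identification $t^\star=\|w^\star\|_2$. If $\gamma^\star=s^\star>0$, it follows from \eqref{eq:kkt_cs_c}. If $\gamma^\star=0$, either every $\rho_i$ with $\alpha_i^\star>0$ vanishes, or $t^\star$ can be lowered to $\|w^\star\|_2$ without loss of optimality. I would invoke the paper's convention that the cone constraint is tight at optimality. With that settled, the theorem follows.
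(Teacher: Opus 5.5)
Your proposal is correct and takes the same route as the paper: it combines the safe interval $[\text{LB}_i,\text{UB}_i]$ from Lemma~\ref{thr:rob_bounds} with the ideal rule of Theorem~\ref{thm:ideal_safe}. Your extra checks make explicit what the paper's one-line proof leaves implicit: nonnegativity of the gap, uniqueness of $w^\star$, the choice $t^\star=\|w^\star\|_2$, and deriving the ideal rule from Lemma~\ref{lem:dual_to_margin} by exhausting the three cases for $\alpha_i^\star$ (this is a converse argument, not a contrapositive one as you call it).
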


\begin{proof}
The conclusion follows from Theorem~\ref{thm:ideal_safe}, which relates
$\alpha_i^\star$ to the position of $\psi_i^\star$ relative to~$1$, together with Lemma~\ref{thr:rob_bounds}, which provides safe bounds on $\psi_i^\star$. If the interval lies entirely above or below~$1$, the value of $\alpha_i^\star$ is uniquely determined; otherwise, no definitive conclusion can be drawn.
\end{proof}

Accordingly, the training set can be partitioned into three disjoint sets based on the safe bounds:
\begin{subequations}
    \begin{align*}
    \mathcal{R} &\triangleq \{ i \in  \mathcal{N} \mid \text{LB}_i > 1 \}, \\[0.5em]
    \mathcal{S} &\triangleq \{ i \in  \mathcal{N} \mid \text{UB}_i < 1 \}, \\[0.5em]
    \mathcal{F} &\triangleq \mathcal{N} \setminus (\mathcal{R} \cup \mathcal{S}).
    \end{align*}
\end{subequations}
Finally, the \emph{active set} for the optimization problem can be safely restricted to
\begin{equation*}
    \mathcal{A} = \mathcal{S} \cup \mathcal{F} = \mathcal{N} \setminus \mathcal{R},
\end{equation*}

This restriction can substantially reduce the computational burden
without sacrificing optimality guarantees, since all excluded samples
are certified to be either fully inactive ($\alpha_i^\star = 0$)
or saturated ($\alpha_i^\star = C$).





Next, we propose a dynamic safe screening procedure for R-SVM, with detailed pseudocode provided in Algorithm~\ref{alg:ds_robust_svm}. Unlike static screening methods~\citep{elghaoui2010}, which eliminate samples only once before optimization, the proposed approach applies screening rules repeatedly during the optimization process~\citep{bonnefoy2014dynamic}. This dynamic strategy allows the algorithm to progressively identify samples $i$ whose optimal dual variables satisfy $\alpha_i^\star=1$ or $\alpha_i^\star=C$, thereby reducing the effective problem size during training. The algorithm will be used in the next section to evaluate the performance of the proposed screening approach.

\begin{algorithm}[!htbp]
\caption{Dynamic Safe Screening for Robust SVM}
\label{alg:ds_robust_svm}
\begin{algorithmic}[1]

\Require Training set $\{(\tilde{x}_i, y_i, \rho_i)\}_{i\in\mathcal N}$,
parameter $C$,
tolerance duality $\epsilon$,
minimum working set size $F_{\min}$


\State $\mathcal R \gets \emptyset$, $\mathcal S \gets \emptyset$, $\mathcal F \gets \mathcal N$

\State $k \gets 0$, $\alpha^{(0)} \gets 0$, $w^{(0)} \gets 0$, $G^{(0} \gets +\infty$

\While{$|\mathcal F| > F_{\min}$ \textbf{and} $G > \epsilon$}


    \State $w^{(k)}, \alpha^{(k)} \gets \text{train R-SVM with low tolerance on dataset } (\tilde{x}_i, y_i, \rho_i)  \text{ for } i\in\mathcal{F} \text{ and parameter } C$ 

    

    \State Compute primal objective $P^{(k)}$ by \eqref{eq:robustprimal}, dual objective $D^{(k)}$ by \eqref{eq:dual_problem} and duality gap $G^{(k)} = P^{(k)} - D^{(k)}$

    \State Compute the radius of GAP safe ball $R^{(k)}$ using Eq.~\eqref{eq:radius}

    \For{each sample $i \in \mathcal F$}

        \State Compute lower bound $\mathrm{LB}_i$ using Eq.~\eqref{eq:lrob} and upper bound $\mathrm{UB}_i$ using Eq.~\eqref{eq:urob}

        \If{$\mathrm{LB}_i > 1$}
            \State $\mathcal R \gets \mathcal R \cup \{i\}$, $\mathcal F \gets \mathcal F \setminus \{i\}$
        \ElsIf{$\mathrm{UB}_i < 1$}
            \State $\mathcal S \gets \mathcal S \cup \{i\}$, $\mathcal F \gets \mathcal F \setminus \{i\}$
        \EndIf
    \EndFor

    \State $k \gets k + 1$

\EndWhile

\State $\mathcal A \gets \mathcal F \cup \mathcal S$

\State $w^\star \gets \text{train R-SVM with dataset } (\tilde{x}_i, y_i, \rho_i)  \text{ for } i\in\mathcal{A} \text{ and parameter } C$ 

\State \Return $w^\star$

\end{algorithmic}
\end{algorithm}

\section{Experiments}
\label{sec:experiments}

\subsection{Experimental Setup}

All experiments were conducted using Python~3 on the Google Colab platform to evaluate the computational performance of the proposed method in the Robust Support Vector Machine (Robust SVM) setting. The primary objective of the experiments is to examine the impact of safe screening on the optimization process.

In this study, two approaches are considered:

\begin{itemize}
    \item R-SVM (baseline): the R-SVM formulation is expressed as a convex optimization problem and solved using the SCS solver through cvxpy\footnote{Python convex optimization library}.    
    \item R-SVM + Safe screening: an extended version of the model in which a safe screening mechanism is incorporated to eliminate irrelevant features prior to or during the optimization process.
\end{itemize}
The experiments focus on observing the computational behavior of the screening strategy. In particular, we monitor
\begin{itemize}
    \item The proportion of features eliminated by the screening rule,
    \item The overall time required to solve the optimization problem.
\end{itemize}

The experiments are conducted on two real-world datasets. A summary of the datasets is provided in Table~\ref{tab:dataset}.
\begin{table}[!htbp]
\centering
\caption{Summary of the real-world datasets used in the experiments}
\label{tab:dataset}
\begin{tabular}{ccccc}
\toprule
\textbf{Dataset} & \textbf{Classes} & \textbf{Samples} & \textbf{Dimension} & \textbf{Source} \\
\midrule
Breast Cancer Wisconsin & Benign vs Malignant & 569 & 30 & UCI ML Repository \\
Spambase Email & Spam vs Non-spam & 4601 & 57 & UCI ML Repository \\
\bottomrule
\end{tabular}
\end{table}
To study the behavior of the algorithm under different conditions, the experiments are performed with multiple parameter configurations. In particular, four values of the regularization parameter $C$ and four values of the uncertainty radius $\rho$ are considered, where $\rho$ represents the radius of the uncertainty set in the robust formulation.
\[
C \in \{0.01,\,0.1,\,1.0,\,10.0\},
\]
\[
\rho \in \{0.0,\,0.01,\,0.02,\,0.05\}.
\]
For each dataset, experiments are conducted on all combinations of $C$ and $\rho$. This setting allows us to observe how the effectiveness of safe screening and the feature elimination speed vary with different levels of regularization and uncertainty.

Because Python does not provide a dedicated library for solving Robust SVM in the same way that LinearSVC is available for standard SVM, the R-SVM model in this study is formulated explicitly as a convex optimization problem and solved using the SCS solver through CVXPY. Due to the characteristics of the SCS solver, the runtime of the algorithm is not fully deterministic, and the library does not provide a mechanism to fix the random seed to ensure identical runtime across executions. As a result, the execution time of the same experiment may vary slightly between runs. To obtain more reliable experimental results, each parameter configuration is repeated 100 times. The reported results are presented as the mean of the runtime over these runs.

\subsection{Breast Cancer Dataset}

\begin{figure}[!htpb]
    \centering
    \includegraphics[width=\textwidth]{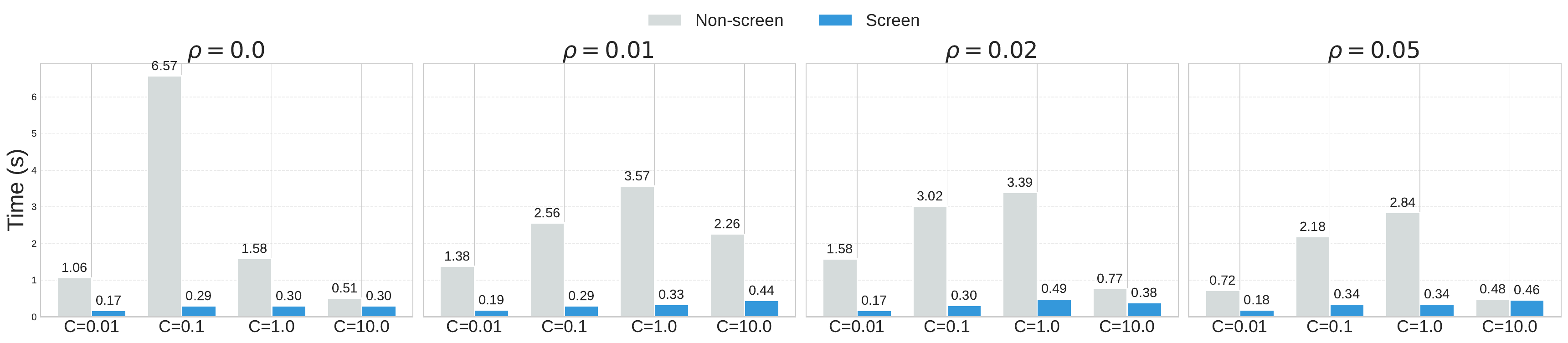}
    \caption{Training time comparison with and without safe screening for different values of $C$ and $\rho$ on the Breast Cancer Wisconsin dataset.}
    \label{fig:time_cancer}
\end{figure}

Gap Safe Screening demonstrates clear acceleration on the Breast Cancer dataset (a small dataset). Specifically, the screening-based method is 1.07 to 18.96 times faster than the baseline R-SVM without screening in terms of training time, as shown in Figure \ref{fig:time_cancer}.

\begin{figure}[!htpb]
    \centering
    \includegraphics[width=\textwidth]{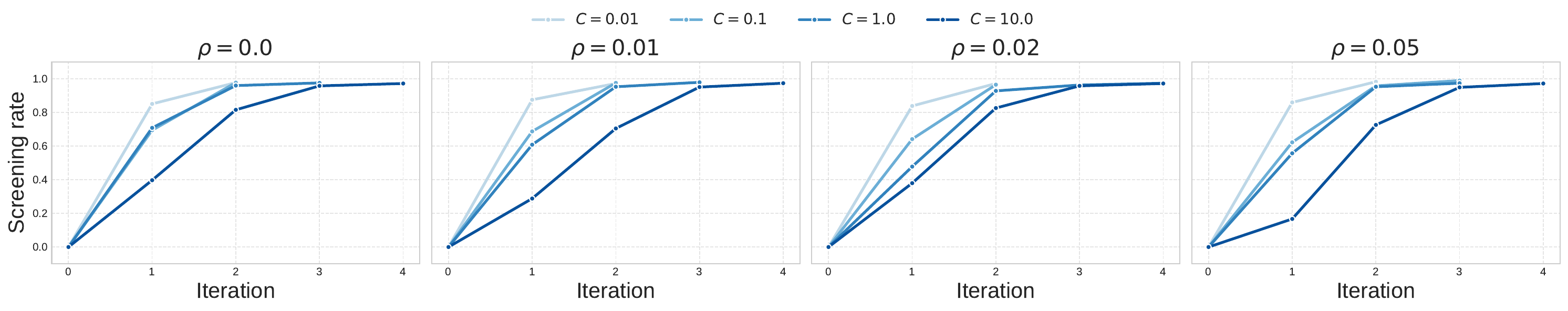}
    \caption{Iteration-wise dynamics of the safe screening elimination ratio for different values of the uncertainty radius $\rho$ on the Breast Cancer Wisconsin dataset.}
    \label{fig:screeningrate_cancer}
\end{figure}

In addition, the results indicate a very high feature screening rate (approximately 96.5\%–98.9\%). Figure \ref{fig:screeningrate_cancer} clearly illustrates this trend, showing that a large portion of features are progressively screened as the optimization algorithm proceeds.

\subsection{Spambase Email Dataset}

\begin{figure}[!htpb]
    \centering
    \includegraphics[width=\textwidth]{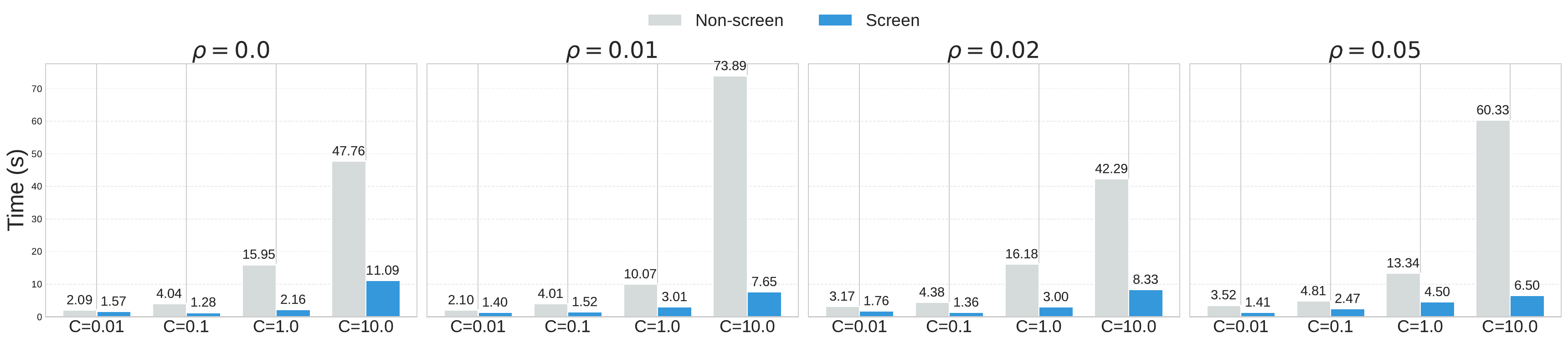}
    \caption{Training time comparison with and without safe screening for different values of $C$ and $\rho$ on the Spambase Email dataset.}
    \label{fig:time_spam}
\end{figure}

On the Spambase Email dataset, which represents a medium-sized dataset, Gap Safe Screening continues to provide notable computational improvements. When the screening strategy is applied, the prediction time is reduced by a factor of 1.54 to 9.85 compared with the baseline R-SVM without screening, as illustrated in Figure \ref{fig:time_spam}.

\begin{figure}[!htpb]
    \centering
    \includegraphics[width=\textwidth]{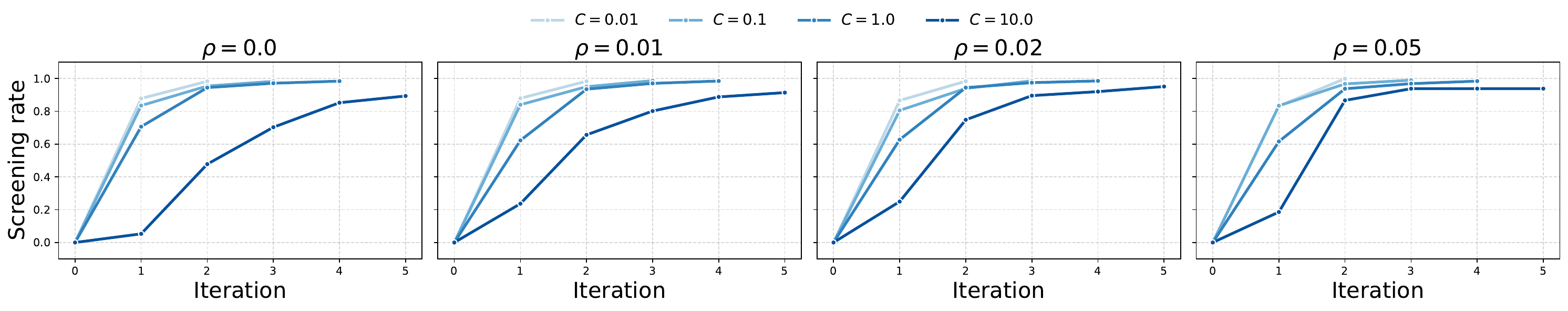}
    \caption{Iteration-wise dynamics of the safe screening elimination ratio for different values of the uncertainty radius $\rho$ on the Spambase Email dataset.}
    \label{fig:screeningrate_spam}
\end{figure}

Moreover, the experimental results show that a substantial proportion of features can be screened out, ranging from approximately 89.3\% to 99.8\% across different parameter settings. Figure \ref{fig:screeningrate_spam} further demonstrates this pattern, indicating that many features can be discarded early before solving the optimization problem, which contributes to the reduction in overall computational cost.
\section{Conclusion}

In this paper, we proposed a safe sample screening framework tailored to R-SVM that explicitly handle feature uncertainty through uncertainty sets. Our approach leverages the Lagrangian dual formulation of the robust optimization problem and adapts the GAP ball region to construct safe regions for sample screening. By exploiting the strong convexity of the primal objective, the resulting screening rules can safely identify training samples whose uncertainty sets do not affect the optimal classifier, thereby reducing the effective problem size without compromising optimality.

Numerical experiments show that the proposed screening strategy significantly accelerates R-SVM training on real datasets and across various parameter settings while maintaining classification performance. A substantial portion of training samples can be safely eliminated in advance, leading to marked reductions in computational cost.

For future work, we plan to extend this safe screening framework to other robust learning models, including regression tasks such as LASSO, and to investigate whether advanced constructions like the RYU ball can be adapted to robust SVM formulations. We also aim to explore dynamic and hybrid screening strategies to further improve efficiency while maintaining theoretical guarantees.




\bibliographystyle{plainnat}
\bibliography{head-refs}

@article{cortes1995svm,
  title={Support-vector networks},
  author={Cortes, Corinna and Vapnik, Vladimir},
  journal={Machine Learning},
  volume={20},
  number={3},
  pages={273--297},
  year={1995},
  publisher={Springer}
}

@inproceedings{boser1992training,
author = {Boser, Bernhard E. and Guyon, Isabelle M. and Vapnik, Vladimir N.},
title = {A training algorithm for optimal margin classifiers},
year = {1992},
isbn = {089791497X},
publisher = {Association for Computing Machinery},
address = {New York, NY, USA},
url = {https://doi.org/10.1145/130385.130401},
doi = {10.1145/130385.130401},
booktitle = {Proceedings of the Fifth Annual Workshop on Computational Learning Theory},
pages = {144–152},
numpages = {9},
location = {Pittsburgh, Pennsylvania, USA},
series = {COLT '92}
}

@book{vapnik1998statistical,
  title={Statistical Learning Theory},
  author={Vapnik, Vladimir N.},
  year={1998},
  publisher={Wiley},
  address={New York, NY},
  isbn={9780471030034}
}

@InProceedings{ogawa2013safe,
  title = 	 {Safe Screening of Non-Support Vectors in Pathwise SVM Computation},
  author = 	 {Ogawa, Kohei and Suzuki, Yoshiki and Takeuchi, Ichiro},
  booktitle = 	 {Proceedings of the 30th International Conference on Machine Learning},
  pages = 	 {1382--1390},
  year = 	 {2013},
  editor = 	 {Dasgupta, Sanjoy and McAllester, David},
  volume = 	 {28},
  series = 	 {Proceedings of Machine Learning Research},
  address = 	 {Atlanta, Georgia, USA},
  month = 	 {17--19 Jun},
  publisher =    {PMLR},
  url = 	 {https://proceedings.mlr.press/v28/ogawa13b.html},
}

@inproceedings{zhao2014safe,
    author = {Zhao, Zheng and Liu, Jun and Cox, James},
    title = {Safe and efficient screening for sparse support vector machine},
    year = {2014},
    isbn = {9781450329569},
    publisher = {Association for Computing Machinery},
    address = {New York, NY, USA},
    url = {https://doi.org/10.1145/2623330.2623686},
    doi = {10.1145/2623330.2623686},
    booktitle = {Proceedings of the 20th ACM SIGKDD International Conference on Knowledge Discovery and Data Mining},
    pages = {542–551},
    numpages = {10},
    location = {New York, New York, USA},
    series = {KDD '14}
}

@inproceedings{ndiaye2015gap,
 author = {Ndiaye, Eugene and Fercoq, Olivier and Gramfort, Alexandre and Salmon, Joseph},
 booktitle = {Advances in Neural Information Processing Systems},
 editor = {C. Cortes and N. Lawrence and D. Lee and M. Sugiyama and R. Garnett},
 pages = {},
 publisher = {Curran Associates, Inc.},
 title = {GAP Safe screening rules for sparse multi-task and multi-class models},
 url = {https://proceedings.neurips.cc/paper_files/paper/2015/file/69421f032498c97020180038fddb8e24-Paper.pdf},
 volume = {28},
 year = {2015}
}

@inproceedings{ndiaye2016gap,
 author = {Ndiaye, Eugene and Fercoq, Olivier and Gramfort, Alexandre and Salmon, Joseph},
 booktitle = {Advances in Neural Information Processing Systems},
 editor = {D. Lee and M. Sugiyama and U. Luxburg and I. Guyon and R. Garnett},
 pages = {},
 publisher = {Curran Associates, Inc.},
 title = {GAP Safe Screening Rules for Sparse-Group Lasso},
 url = {https://proceedings.neurips.cc/paper_files/paper/2016/file/555d6702c950ecb729a966504af0a635-Paper.pdf},
 volume = {29},
 year = {2016}
}

@article{ndiaye2017gap,
    author = {Ndiaye, Eugene and Fercoq, Olivier and Gramfort, Alexandre and Salmon, Joseph},
    title = {Gap safe screening rules for sparsity enforcing penalties},
    year = {2017},
    issue_date = {January 2017},
    publisher = {JMLR.org},
    volume = {18},
    number = {1},
    issn = {1532-4435},
    journal = {J. Mach. Learn. Res.},
    month = jan,
    pages = {4671–4703},
    numpages = {33}
}

@misc{ogawa2014sample,
  title={Safe Sample Screening for Support Vector Machines}, 
  author={Kohei Ogawa and Yoshiki Suzuki and Shinya Suzumura and Ichiro Takeuchi},
  year={2014},
  eprint={1401.6740},
  archivePrefix={arXiv},
  primaryClass={stat.ML},
  url={https://arxiv.org/abs/1401.6740}, 
}

@article{elghaoui2010,
  title={Safe Feature Elimination in Sparse Supervised Learning},
  author={El Ghaoui, Laurent and Viallon, Vivian and Rabbani, Tarek},
  journal={Pacific Journal of Optimization},
  volume={6},
  number={3},
  pages={667--698},
  year={2010},
  publisher={Yokohama Publishers}
}

@InProceedings{shibagaki2016simultaneous,
  title = 	 {Simultaneous Safe Screening of Features and Samples in Doubly Sparse Modeling},
  author = 	 {Shibagaki, Atsushi and Karasuyama, Masayuki and Hatano, Kohei and Takeuchi, Ichiro},
  booktitle = 	 {Proceedings of The 33rd International Conference on Machine Learning},
  pages = 	 {1577--1586},
  year = 	 {2016},
  editor = 	 {Balcan, Maria Florina and Weinberger, Kilian Q.},
  volume = 	 {48},
  series = 	 {Proceedings of Machine Learning Research},
  address = 	 {New York, New York, USA},
  month = 	 {20--22 Jun},
  publisher =    {PMLR},
  url = 	 {https://proceedings.mlr.press/v48/shibagaki16.html},
}

@article{tran2025one,
  title={One to beat them all:“RYU”--a unifying framework for the construction of safe balls},
  author={Tran, Thu-Le and Elvira, Cl{\'e}ment and Dang, Hong-Phuong and Herzet, C{\'e}dric},
  journal={Open Journal of Mathematical Optimization},
  volume={6},
  pages={1--16},
  year={2025}
}

@InProceedings{fercoq2015mind,
  title = 	 {Mind the duality gap: safer rules for the Lasso},
  author = 	 {Fercoq, Olivier and Gramfort, Alexandre and Salmon, Joseph},
  booktitle = 	 {Proceedings of the 32nd International Conference on Machine Learning},
  pages = 	 {333--342},
  year = 	 {2015},
  editor = 	 {Bach, Francis and Blei, David},
  volume = 	 {37},
  series = 	 {Proceedings of Machine Learning Research},
  address = 	 {Lille, France},
  month = 	 {07--09 Jul},
  publisher =    {PMLR},
  url = 	 {https://proceedings.mlr.press/v37/fercoq15.html},
}

@article{wang2013lasso,
    author = {Wang, Jie and Wonka, Peter and Ye, Jieping},
    title = {Lasso screening rules via dual polytope projection},
    year = {2015},
    issue_date = {January 2015},
    publisher = {JMLR.org},
    volume = {16},
    number = {1},
    issn = {1532-4435},
    journal = {J. Mach. Learn. Res.},
    month = jan,
    pages = {1063–1101},
    numpages = {39}
}

@inproceedings{bonnefoy2014dynamic,
  author={Bonnefoy, Antoine and Emiya, Valentin and Ralaivola, Liva and Gribonval, Rémi},
  booktitle={2014 22nd European Signal Processing Conference (EUSIPCO)}, 
  title={A dynamic screening principle for the Lasso}, 
  year={2014},
  volume={},
  number={},
  pages={6-10},
  doi={}
}

@inproceedings{herzet2022region,
  title={Region-free Safe Screening Tests for $\ell_1$-penalized Convex Problems},
  author={Herzet, C{\'e}dric and Elvira, Cl{\'e}ment and Dang, Hong-Phuong},
  booktitle={2022 30th european signal processing conference (eusipco)},
  pages={2061--2065},
  year={2022},
  organization={IEEE}
}

@inproceedings{dai2012ellipsoid,
  title={An ellipsoid based, two-stage screening test for BPDN},
  author={Dai, Liang and Pelckmans, Kristiaan},
  booktitle={2012 proceedings of the 20th European signal processing conference (EUSIPCO)},
  pages={654--658},
  year={2012},
  organization={IEEE}
}

@inproceedings{mialon2020screening,
  title={Screening data points in empirical risk minimization via ellipsoidal regions and safe loss functions},
  author={Mialon, Gr{\'e}goire and Mairal, Julien and d’Aspremont, Alexandre},
  booktitle={International Conference on Artificial Intelligence and Statistics},
  pages={3610--3620},
  year={2020},
  organization={PMLR}
}

@inproceedings{tran2022beyond,
  title={Beyond GAP screening for Lasso by exploiting new dual cutting half-spaces},
  author={Tran, Thu-Le and Elvira, Cl{\'e}ment and Dang, Hong-Phuong and Herzet, C{\'e}dric},
  booktitle={2022 30th European Signal Processing Conference (EUSIPCO)},
  pages={2056--2060},
  year={2022},
  organization={IEEE}
}

@misc{zhai2019safe,
      title={Safe Sample Screening for Robust Support Vector Machine}, 
      author={Zhou Zhai and Bin Gu and Xiang Li and Heng Huang},
      year={2019},
      eprint={1912.11217},
      archivePrefix={arXiv},
      primaryClass={cs.LG},
      url={https://arxiv.org/abs/1912.11217}, 
}

@article{xu2009robust,
    author = {Xu, Huan and Caramanis, Constantine and Mannor, Shie},
    title = {Robustness and Regularization of Support Vector Machines},
    year = {2009},
    issue_date = {12/1/2009},
    publisher = {JMLR.org},
    volume = {10},
    issn = {1532-4435},
    journal = {J. Mach. Learn. Res.},
    month = dec,
    pages = {1485–1510},
    numpages = {26}
}

@article{yang2018safe,
  title={A safe accelerative approach for pinball support vector machine classifier},
  author={Yang, Zhiji and Xu, Yitian},
  journal={Knowledge-Based Systems},
  volume={147},
  pages={12--24},
  year={2018},
  publisher={Elsevier}
}

@article{pan2017safe,
  title={Safe screening rules for accelerating twin support vector machine classification},
  author={Pan, Xianli and Yang, Zhiji and Xu, Yitian and Wang, Laisheng},
  journal={IEEE transactions on neural networks and learning systems},
  volume={29},
  number={5},
  pages={1876--1887},
  year={2017},
  publisher={IEEE}
}

@unpublished{tran2026new,
  TITLE = {{New Safe Screening Rule for Fast Optimal Transport on Tree, Cycle and Cactus Graphs}},
  AUTHOR = {Tran, Thu-Le and Ngo, Trang and Nguyen, Thanh-Tung and Nguyen, Duy and Nguyen, Ngan and Nguyen, Kien Trung},
  URL = {https://hal.science/hal-05511202},
  NOTE = {working paper or preprint},
  YEAR = {2026},
  MONTH = Feb,
  DOI = {10.13140/RG.2.2.17465.84326},
  HAL_ID = {hal-05511202},
  HAL_VERSION = {v1},
}

@inproceedings{su2024safe,
  title={Safe screening for L2-penalized unbalanced optimal transport problem},
  author={Su, Xun and Fang, Zhongxi and Kasai, Hiroyuki},
  booktitle={2024 International Joint Conference on Neural Networks (IJCNN)},
  pages={1--8},
  year={2024},
  organization={IEEE}
}

@inproceedings{guyard2022screen,
  title={Screen \& relax: accelerating the resolution of Elastic-Net by safe identification of the solution support},
  author={Guyard, Th{\'e}o and Herzet, C{\'e}dric and Elvira, Cl{\'e}ment},
  booktitle={Icassp 2022-2022 ieee international conference on acoustics, speech and signal processing (icassp)},
  pages={5443--5447},
  year={2022},
  organization={IEEE}
}

\end{document}